\newcommand{\Prob}{\mathbb{P}}
\newtheorem{theorem}{Theorem}
\newtheorem{definition}{Definition}
\newtheorem{example}{Example}
\newtheorem{remark}{Remark}
\title{Fair Risk Control: A Generalized Framework for Calibrating Multi-group Fairness Risks}
\author{
Lujing Zhang\footnotemark[1]
\and
Aaron Roth\footnotemark[2]
\and
Linjun Zhang\footnotemark[3], \footnotemark[4]
}
\begin{document}
\maketitle
\footnotetext[1]{Work was done during Lujing Zhang's remote research internship at Rutgers and Penn. Email: misdrifter@stu.pku.edu.cn}
\footnotetext[2]{University of Pennsylvania. Email: aaroth@cis.upenn.edu}
\footnotetext[3]{Rutgers University. Email: linjun.zhang@rutgers.edu}
\footnotetext[3]{Corresponding Author.}

\begin{abstract}
This paper introduces a framework for post-processing machine learning models so that their predictions satisfy multi-group fairness guarantees. Based on the celebrated notion of multicalibration, we introduce $(\bm s,\mathcal{G}, \alpha)-$GMC (Generalized Multi-Dimensional Multicalibration) for multi-dimensional mappings $\bm s$, constraint set $\mathcal{G}$, and a pre-specified threshold level $\alpha$. We propose associated algorithms to achieve this notion in general settings. This framework is then applied to diverse scenarios encompassing different fairness concerns, including false negative rate control in image segmentation, prediction set conditional uncertainty quantification in hierarchical classification, and de-biased text generation in language models. We conduct numerical studies on several datasets and tasks.
\end{abstract}

\section{Introduction}
A common theme across the fairness in machine learning literature is that some measure of \emph{error} or \emph{risk} should be equalized across sub-populations. Common measures evaluated across demographic groups include false positive and false negative rates \citep{hardt2016equality} and calibration error \citep{kleinberg2016inherent,chouldechova2017fair}. Initial work in this line gave methods for equalizing different risk measures on disjoint groups. A second generation of work gave methods for equalizing measures of risk across groups even when the groups could intersect -- e.g. for false positive and negative rates \citep{kearns2018preventing}, calibration error \citep{hebertjohnson2018calibration}, regret \citep{blum2019advancing,rothblum2021multi}, prediction set coverage \citep{jung2021moment,jung2022batch, deng2023happymap}, among other risk measures. In general, distinct algorithms are derived for each of these settings, and they are generally limited to one-dimensional predictors of various sorts.

In this work, we propose a unifying framework for fair risk control in settings with multi-dimensional outputs, based on multicalibration \citep{hebertjohnson2018calibration}. This framework is developed as an extension of the work by \cite{deng2023happymap,NR23}, and addresses the need for calibrating multi-dimensional output functions. To illustrate the usefulness of this framework, we apply it to a variety of settings, including false negative rate control in image segmentation, prediction set conditional coverage guarantees in hierarchical classification, and de-biased text generation in language models. These applications  make use of the additional power granted by our multi-dimensional extension of multicalibration. 

\subsection{Related Work}
Multicalibration was introduced by \cite{hebertjohnson2018calibration} as a fairness motivated constraint that informally asks that a 1-dimensional predictor of a binary-valued outcome be unbiased, conditional on both its own prediction and on membership of the input in some number of pre-defined groups (see also a line of prior work that asks for a similar set of guarantees under slightly different conditions \citep{dawid1985calibration,sandroni2003calibration,foster2006calibration}). Subsequently, multicalibration has been generalized in a number of ways. \cite{jung2021moment} generalizes multicalibration to real-valued outcomes, and defines and studies a variant of multicalibration that predicts variance and higher moments rather than means. \cite{gupta2022online} extends the study of multicalibration of both means and moments to the online setting, and defines a variant of mulicalibration for quantiles, with applications to uncertainty estimation. \cite{bastani2022practical,jung2022batch} gives more practical variants of quantile multicalibration with applications to conditional coverage guarantees in conformal prediction, together with experimental evaluation. \cite{deng2023happymap} gives an abstract generalization of 1-dimensional multicalibration, and show how to cast other algorithmic fairness desiderata like false positive rate control in this framework. \cite{NR23} gives a characterization of the scope of 1-dimensional multicalibration variants via a connection to property elicitation: informally, a property of a distribution can be multicalibrated if and only if it minimizes some 1-dimensional separable regression function. The primary point of departure of this paper is that we propose a multi-dimensional generalization of multicalibration: it can be viewed as the natural multi-dimensional generalization of \cite{deng2023happymap}. 

Another line of work generalizes multicalibration in an orthogonal direction, leaving the outcomes binary valued but generalizing the class of checking rules that are applied. \cite{dwork2021outcome} defines outcome indistinguishability, which generalizes multicalibration to require indistinguishability between the predicted and true label distributions with respect to a fixed but arbitrary set of distinguishers. \cite{kakade2008deterministic,foster2018smooth} define ``smooth calibration'' that relaxes calibration's conditioning event to be a smooth function of the prediction. \cite{gopalan2022low} defines a hierarchy of relaxations called low-degree multicalibration that further relaxes smooth calibration and demonstrates desirable statistical properties. \cite{zhao2021calibrating} and \cite{noarov2023high} define notions of calibration tailored to the objective function of a downstream decision maker. These last lines of work focus on multi-dimensional outputs.

These lines of work are part of a more general literature studying \emph{multi-group fairness}. Work in this line aims e.g. to minimize disparities between false positive or false negative rates across groups \citep{kearns2018preventing,kearns2019empirical}, or to minimize regret (measured in terms of accuracy) simultaneously across all groups \citep{blum2019advancing,rothblum2021multi,globus2022algorithmic,tosh2022simple}. A common theme across these works is that the groups may be arbitrary and intersecting.

\subsection{Notation}
Let $\mathcal{X}$ represent a feature domain, $\mathcal{Y}$ represent a label domain, and $\mathcal{D}$ denote a joint (feature, label)  data distribution. For a finite set $A$, we use $|A|$ and $\Delta A$, to denote the cardinality of $A$ and the simplex over $A$ respectively. Specifically, $\Delta A = \{(p_1, p_2, \ldots, p_{|A|}): 0 \leq p_i \leq 1, \sum_{i=1}^{|A|} p_i = 1\}$. Given a set $\mathcal{F}$, we use ${\rm Proj}_{\mathcal{F}}$ to denote the $\ell_2$-projection onto the set.
We also introduce some shorthand notation. For two vectors $\bm{a}$ and $\bm{b}$, $\langle\bm a,\bm b\rangle$ represents their inner product. For a positive integer $T$, we define $[T] = \{1, 2, \ldots, T\}$. For a function $\bm f(\bm x)=(f_1(\bm x), f_2(\bm x),...,f_m(\bm x))$, we denote $\lVert \bm f \rVert_{\infty} = \sup_{\bm x\in \mathcal{X}, i\in [m]}[f_i(\bm x)]$.

\section{Formulation and Algorithm}
\subsection{A generalized notion of Multicalibration} 
Let $\bm x\in \mathcal{X}$ represent the feature vector of the input, $\bm y\in \mathcal{Y}$ represent the label, and let $\bm h(\bm x)\in \mathcal{H}$ denote a multi-dimensional scoring function associated with the input. For example, in image segmentation tasks, $\bm h(\bm x)\in \mathbb{R}^k$ ($k$ is the number of pixels) is intended to approximate the probability of a pixel being part of a relevant segment, often learned by a neural network. In text generation tasks, $\bm h(\bm x)$ is the distribution over the vocabulary produced by a language model given context $\bm x$. 

For $\bm x\in \mathcal{X}$, consider an output function $\bm f:\mathcal{X}\to \mathcal{F} \subset \mathbbm{R}^m$, defined as $\bm f(\bm x) = (f_1(\bm x), \ldots, f_m(\bm x))$, where $\mathcal{F}$ is a convex set. We denote the class of functions that $\bm f$ belongs to by $\mathcal Q$. 
 For example, in text generation tasks, $\bm f(\bm x)$ is the calibrated distribution over the output vocabulary and is multi-dimensional (with dimension equal to the vocabulary size); in binary classification tasks where $h$ and $f$ are both scalars, $f(\bm x)$ is the threshold used to convert the raw score $h(\bm x)$ into binary predictions, i.e.  $\mathbbm{1}_{\{h(\bm x)>f(\bm x)\}}$. 
 
We write $\bm s(\bm f, \bm x, \bm h, \bm y, \mathcal{D}): \mathcal Q  \times \mathcal{X}\times \mathcal{H} \times \mathcal{Y} \times \mathcal{P} \to \mathbb{R}^l$ to denote a mapping functional of interest, where $\mathcal{D}$ is the joint distribution of $(\bm x, \bm h, \bm y)$ and $\mathcal{P}$ is the distribution space. Here, $\bm s$ is set to be a functional of $\bm f$ rather than a function of $\bm f(\bm x)$, which offers us more flexibility that will be useful in our applications. For example, in text generation, where $\bm h(\bm x)\in \Delta\mathcal{Y}$ is the distribution over tokens output by an initial language model, our goal might be to find $\bm f(\bm x)\in \Delta\mathcal{Y}$, an adjusted distribution over tokens $y\in \mathcal{Y}$ with $|\mathcal{Y}|=m$. In this case we could set $\bm s=\bm f(\bm x) -\mathbb{E}_{\bm x}\bm f(\bm x)\in \mathbbm{R}^m$ to be the mapping functional. 
We can calibrate the probabilities (through $\bm s$) to be ``fair'' in some way -- e.g. that the probability of outputting various words denoting professions should be the same regardless of the gender of pronouns used in the prompt. We note that we do not always use the dependence of $\bm s$ on all of its inputs and assign different $\bm s$ in different settings.

We write $\mathcal{G}$ to denote the class of functions that encode demographic subgroups (along with other information) and for each $\bm g\in \mathcal{G}$, $\bm g(\bm f(\bm x), \bm x)\in \mathbb{R}^l$, consistent with the dimension of $\bm s(\bm f, \bm x, \bm h, \bm y, \mathcal{D})$ so that we can calibrate over every dimension of $\bm s$. For example, when $l=1$, $\mathcal{G}$ can be set to be the indicator function of different sensitive subgroups of $\mathcal{X}$. Alternately, in fair text generation tasks, when the dimension of $\bm s$ equals the size of the set $\mathcal{Y}$, denoted as $l=m$, we can set the vector $\bm g \in \mathcal{G}$ to have a value of $1$ in the dimensions corresponding to certain types of sensitive words, and $0$ in all other dimensions. 

We now formally introduce the $(\bm s,\mathcal{G}, \alpha)$-Generalized Multicalibration ($(\bm s,\mathcal{G}, \alpha)$-GMC) definition.
\begin{definition}[$(\bm s,\mathcal{G}, \alpha)$-GMC]
    Let $\bm x, \bm h,\bm y, \mathcal{D}$ denote the feature vector, the scoring function, the label vector, and the joint distribution of $(\bm x, \bm h, \bm y)$ respectively. Given a function class $\mathcal{G}$, mapping functional $\bm s$, and a threshold $\alpha>0$, we say $\bm f$ satisfies $(\bm s,\mathcal{G}, \alpha)$-Generalized Multicalibration ($(\bm s,\mathcal{G}, \alpha)$-GMC) if 
        $$\mathbb{E}_{(\bm x, \bm h, \bm y)\sim \mathcal{D}}[\langle  \bm s(\bm f, \bm x,\bm h,\bm y, \mathcal{D}),\bm g(\bm f(\bm x),\bm x)\rangle]\leq \alpha,\quad \forall \bm g\in \mathcal{G}.$$
\end{definition}

$(\bm s,\mathcal{G}, \alpha)$-GMC is a flexible framework that can instantiate many existing multi-group fairness notions, including $s$-HappyMap \citep{deng2023happymap}, property multicalibration \citep{NR23}, calibrated multivalid coverage \citep{jung2022batch} and outcome indistinguishability \citep{dwork2021outcome}. More generally, compared to these notions, $(\bm s,\mathcal{G}, \alpha)$-GMC extends the literature in two ways. First, it allows the functions $\bm s$ and $\bm g$ to be multi-dimensional (most prior definitions look similar, but with $1$-dimensional $\bm s$ and $\bm g$ functions). Second, the function $\bm s$ here is more general and allowed to be a \emph{functional} of $\bm f$ (rather than just a function of $\bm f(\bm x)$, the evaluation of $\bm f$ at $\bm x$). These generalizations will be important in our applications.

\subsection{Algorithm and  Convergence Results}
To achieve $(\bm s,\mathcal{G}, \alpha)$-GMC, we present the $(\bm s,\mathcal{G}, \alpha)$-GMC Algorithm, which can be seen as  a natural generalization of algorithms used for more specific notions of multicalibration in previous work \citep{hebertjohnson2018calibration,dwork2021outcome,jung2022batch,deng2023happymap}: 

\begin{algorithm}[H]
\caption{$(\bm s,\mathcal{G}, \alpha)$-GMC lgorithm}\label{alg:general}
\begin{algorithmic}
\State{\bf Input:} step size $\eta\ >0$, initialization $\bm f^{(0)} \in \mathcal Q$, max iteration $T$. {\bf Initialization:} $t=0$.
\While {$t< T, \exists \bm{g}^{(t)}\in \mathcal{G} \; s.t: 
\mathbb{E}_{(\bm x, \bm h, \bm y)\sim \mathcal{D}}[\langle \bm{s}(\bm f^{(t)}, \bm x, \bm h, \bm y, \mathcal{D}),\bm{g}^{(t)}(\bm f^{(t)}(\bm x), \bm x)\rangle]>\alpha$}
\State{Let $\bm{g}^{(t)}\in \mathcal{G}$ be an arbitrary function satisfying the condition in the while statement}
\State{$\bm f^{(t+1)}(\bm x)={\rm Proj}_{\mathcal{F}}\left(\bm f^{(t)}(\bm x)-\eta\bm{g}^{(t)}(\bm f^{(t)}(\bm x), \bm x)\right)$} 
\State{$t=t+1$}
\EndWhile
\State{\bf Output:} $\bm f^{(t)}$
\end{algorithmic}
\end{algorithm}

It is worth noting that our goal involves functionals concerning our objective function $\bm {f}$ in order to capture its global properties. We aim to find a function $\bm{f}$ such that a functional associated with it (obtained by taking the expectation over $\bm{x}$) satisfies the inequalities we have set to meet different fairness demands. Before delving into the main part of our convergence analysis, we introduce some definitions related to functionals. Examples of these definitions can be found in the appendix Section~\ref{sec:eg}. 
\begin{definition}[The derivative of a functional]\label{derivative}
    Given a function $\bm f:\mathcal{X}\to \mathcal{F}$, consider a functional $\mathcal{L}(\bm f, \mathcal{D}):\mathcal Q \times \mathcal{P} \to \mathbb{R}$, where $\mathcal Q$ is the function space of $\bm f$ and $\mathcal{P}$ is a distribution space over $\mathcal{X}$. Assume that $\mathcal{L}(\bm f, \mathcal{D})$ follows the formulation that $\mathcal{L}(\bm f, \mathcal{D})=\mathbb{E}_{\bm x\sim \mathcal{D}}[L(\bm f(\bm x))]$. The derivative function of $\mathcal{L}(\bm f, \mathcal{D})$ with respect to $\bm f$, denoted as $\nabla_{\bm f}\mathcal{L}(\bm f, \mathcal{D}):\mathcal{X} \to \mathcal{F}$, exists if 
    $\forall \bm w\in \mathcal Q, \bm y\in\mathbb{R}^m, \mathcal D\in\mathcal P,
        \mathbb{E}_{\bm x \sim \mathcal{D}}[\langle \nabla_{\bm f}\mathcal{L}(\bm f, \mathcal{D}), \bm w\rangle]
        =\frac{\partial}{\partial \epsilon}\left. \mathcal{L}(\bm f+\epsilon \bm w, \mathcal{D})\right|_{\epsilon=0}.$

In the following, we introduce the definitions of convexity and smoothness of a functional. 

\end{definition}

\begin{definition}[Convexity of a functional]
    Let $\mathcal{L}$ and $\bm f$ be defined as in  Definition \ref{derivative}. A functional $\mathcal{L}$ is convex with respect to $\bm f$ if for any $\bm{f_1}, \bm{f_2}\in \mathcal Q,\mathcal{L}(\bm {f_1}, \mathcal{D})-\mathcal{L}(\bm {f_2}, \mathcal{D})\geq \mathbb{E}_{\bm x \sim \mathcal{D}}[\langle \nabla_{\bm f} \mathcal{L}(\bm {f_2},\mathcal{D}), \bm {f_1}-\bm{f_2} \rangle]. $
\end{definition}

\begin{definition}[$K_{\mathcal{L}}$-smoothness of a functional]
    Let $\mathcal{L}$ and $\bm f$ be defined as in Definition  \ref{derivative}. A functional $\mathcal{L}$ is $K_{\mathcal{L}}-$smooth if for any $\bm{f_1}, \bm{f_2}\in \mathcal Q,$
    $
        \mathcal{L}(\bm {f_1}, \mathcal{D})-\mathcal{L}(\bm {f_2}, \mathcal{D})\leq \mathbb{E}_{\bm x\sim \mathcal{D}}[\langle \nabla \mathcal{L}(\bm {f_2}, \mathcal{D}), \bm {f_1} -\bm{f_2}\rangle]
        +\mathbb{E}_{\bm x\sim \mathcal{D}}[\frac{K_{\mathcal{L}}}{2}\lVert\bm{f_1}-\bm{f_2}\rVert^2].\nonumber
    $

\end{definition}

We will prove that this algorithm converges and outputs an $\bm f$ satisfying $(\bm s,\mathcal{G}, \alpha)$-GMC whenever the following assumptions are satisfied. These are multidimensional generalizations of the conditions given by \cite{deng2023happymap}.

\textbf{Assumptions}
\begin{enumerate}
    \item[(1).] There exists a potential functional $\mathcal{L}(\bm f, \bm h,\bm y, \mathcal{D})$, such that $\nabla_{\bm f}\mathcal{L}(\bm f, \bm h, \bm y, \mathcal{D})(\bm x) = \bm{s}(\bm f, \bm x, \bm h, \bm y, \mathcal{D}), $\\$\text{ and }\mathcal{L}(\bm f, \bm h,\bm y, \mathcal{D}) \text{ is }K_{\mathcal{L}}$-smooth with respect to $\bm f$ for any $\bm x\in \mathcal{X}.$
    \item[(2).] Let $\bm f^*(\bm x)\triangleq {\rm Proj}_{\mathcal{F}}\bm f(\bm x)$ for all $\bm x\in \mathcal{X}$. For any $\bm f\in \mathcal{Q}$, $\mathcal{L}(\bm f^*, \bm h, \bm y, \mathcal{D}) \leq \mathcal{L}(\bm f, \bm  h, \bm y, \mathcal{D})$ .
    \item[(3).] There exists a positive number $B$, such that for all $\bm g\in \mathcal{G}$ and all $\bm f \in \mathcal{Q}$, $ \mathbb{E}_{\bm x\sim \mathcal{D}}[\lVert \bm{g}(\bm f(\bm x), \bm x)\rVert^2 ]\leq B.$
    \item[(4).] There exists two numbers $C_l,C_u$ such that for all  $\bm f\in \mathcal Q$, 
    $\quad \mathcal{L}(\bm f, \bm h, \bm y, \mathcal{D})\geq C_l,$
    $ \mathcal{L}(\bm f^{(0)}, \bm h, \bm y, \mathcal{D})\leq C_u.$
\end{enumerate}

Assumption (1) says that a potential functional $\mathcal{L}$ exists and it satisfies a $K_{\mathcal{L}}$-smoothness condition with respect to $\bm f$. For example, when $\bm f$ is a predicted distribution, we often set $\bm s=\bm f(\bm x)-\mathbb{E}_{\bm x \sim \mathcal{D}}\bm f(\bm x)$. In this situation, $\mathcal{L}=\mathbb{E}_{\bm x \sim \mathcal{D}}[\frac{1}{2}\lVert \bm f(\bm x)-\mathbb{E}_{\bm x\sim\mathcal{D}}\bm f(\bm x)\rVert^2]$ satisfies the assumption.

Assumption (2) states that the potential function decreases when projected with respect to $\bm f$. A specific example is when $\mathcal{F}=\mathcal{Y}=[0,1]$ and $\mathcal{L}=\mathbb{E}_{(\bm x, y)\sim \mathcal{D}}|f(\bm x) - y|^2.$

Assumption (3) states that the $\ell_2$-norm of the functions in $\mathcal{G}$ is uniformly bounded. It always holds when $\mathcal{G}$  contains indicator functions, which is the most common case in fairness-motivated problems (these are usually the indicator functions for subgroups of the data).

Assumption (4) says that the potential functional $\mathcal{L}$ is lower bounded and this generally holds true when $\mathcal{L}$ is convex. One concrete example is when $s(f(\bm x),h,y)=f(\bm x)-y$ and we have $\mathcal{L}(f, h, y, \mathcal{D})=\mathbb{E}_{\bm x\sim \mathcal{D}}[(f(\bm x)-y)^2]$, which is lower bounded by $0$.

\begin{theorem}
Under Assumptions 1-4, the $(\bm s,\mathcal{G}, \alpha)$-GMC Algorithm with a suitably chosen $\eta = \mathcal{O}(\alpha/(K_{\mathcal{L}}B))$ converges in $T=\mathcal{O}(\frac{2K_{\mathcal{L}}(C_u-C_l)B)}{\alpha^2})$ iterations and outputs a function $\bm f$ satisfying
$$\mathbb{E}_{(\bm x,\bm h,\bm y)\sim \mathcal{D}}[\langle \bm s(\bm f, \bm x, \bm h, \bm y, \mathcal{D}),\bm{g}(\bm f(\bm x), \bm x)\rangle]\leq \alpha, \forall \bm{g} \in \mathcal{G}.$$
\end{theorem}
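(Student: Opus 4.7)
The plan is to run the standard smoothness-plus-potential argument: show that each while-loop iteration decreases the potential functional $\mathcal{L}$ by at least a constant $\Omega(\alpha^2/(K_{\mathcal{L}}B))$, and then combine with the fact that $\mathcal{L}$ is bounded in $[C_l,C_u]$ to bound the number of iterations. Since the algorithm only terminates when no violating $\bm g\in\mathcal{G}$ remains, finite termination immediately implies the output satisfies $(\bm s,\mathcal{G},\alpha)$-GMC.

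Concretely, first I would introduce the intermediate (pre-projection) update $\tilde{\bm f}^{(t+1)}(\bm x) = \bm f^{(t)}(\bm x) - \eta\,\bm g^{(t)}(\bm f^{(t)}(\bm x),\bm x)$ so that $\bm f^{(t+1)} = \mathrm{Proj}_{\mathcal{F}}\tilde{\bm f}^{(t+1)}$. Applying the $K_{\mathcal{L}}$-smoothness of $\mathcal{L}$ (Assumption 1) together with Assumption 1's identification $\nabla_{\bm f}\mathcal{L} = \bm s$ yields
$$
\mathcal{L}(\tilde{\bm f}^{(t+1)},\bm h,\bm y,\mathcal{D}) - \mathcal{L}(\bm f^{(t)},\bm h,\bm y,\mathcal{D})
\le -\eta\,\mathbb{E}\bigl[\langle \bm s(\bm f^{(t)},\bm x,\bm h,\bm y,\mathcal{D}),\bm g^{(t)}(\bm f^{(t)}(\bm x),\bm x)\rangle\bigr] + \tfrac{\eta^2 K_{\mathcal{L}}}{2}\,\mathbb{E}\bigl[\|\bm g^{(t)}(\bm f^{(t)}(\bm x),\bm x)\|^2\bigr].
$$
The while condition guarantees the inner-product expectation exceeds $\alpha$, and Assumption 3 bounds the squared-norm expectation by $B$, giving a per-step decrease of at least $\eta\alpha - \eta^2 K_{\mathcal{L}} B/2$ relative to $\bm f^{(t)}$. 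Choosing $\eta = \alpha/(K_{\mathcal{L}} B)$ (the maximizer of this quadratic in $\eta$) yields a drop of at least $\alpha^2/(2K_{\mathcal{L}} B)$.

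Next I would invoke Assumption 2 to handle the projection: since $\bm f^{(t+1)} = \mathrm{Proj}_{\mathcal{F}}\tilde{\bm f}^{(t+1)}$, we have $\mathcal{L}(\bm f^{(t+1)},\bm h,\bm y,\mathcal{D}) \le \mathcal{L}(\tilde{\bm f}^{(t+1)},\bm h,\bm y,\mathcal{D})$, so the projection only strengthens the decrease. Combined with the pre-projection bound, every while-loop iteration strictly decreases $\mathcal{L}$ by at least $\alpha^2/(2K_{\mathcal{L}} B)$. Using Assumption 4, $\mathcal{L}(\bm f^{(0)}) \le C_u$ and $\mathcal{L}(\bm f^{(t)}) \ge C_l$ at all times, so the number of iterations cannot exceed $2K_{\mathcal{L}} B (C_u - C_l)/\alpha^2$, giving the claimed rate. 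Upon termination, the negation of the while condition is precisely the $(\bm s,\mathcal{G},\alpha)$-GMC guarantee for every $\bm g\in\mathcal{G}$.

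The main obstacle I anticipate is the bookkeeping around the functional derivative: Assumption 1 gives $\nabla_{\bm f}\mathcal{L}(\bm f,\bm h,\bm y,\mathcal{D})(\bm x) = \bm s(\bm f,\bm x,\bm h,\bm y,\mathcal{D})$ pointwise, and I need the smoothness-style inequality to translate this into a bound on $\mathcal{L}(\tilde{\bm f}^{(t+1)}) - \mathcal{L}(\bm f^{(t)})$ where the perturbation $-\eta \bm g^{(t)}$ depends on $\bm x$ through $\bm f^{(t)}(\bm x)$. This requires that the definition of $K_{\mathcal{L}}$-smoothness given above apply with $\bm w = -\eta\bm g^{(t)}(\bm f^{(t)}(\cdot),\cdot)$, which belongs to the relevant function class; once this is verified, the rest of the argument is routine convex-analysis-style potential decrease. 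The remaining calculations — optimizing $\eta$ and summing the per-step decreases — are standard and do not require further ingredients.
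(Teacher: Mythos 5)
Your proposal is correct and follows essentially the same route as the paper's proof: a per-iteration potential decrease of $\alpha^2/(2K_{\mathcal{L}}B)$ obtained from $K_{\mathcal{L}}$-smoothness plus the while-condition, with Assumption 2 absorbing the projection and Assumption 4 capping the total decrease. The only difference is cosmetic — you bound the pre-projection step first and then project, whereas the paper invokes Assumption 2 first — and your flagged concern about applying smoothness with the $\bm x$-dependent perturbation $\bm w=-\eta\,\bm g^{(t)}(\bm f^{(t)}(\cdot),\cdot)$ is handled the same implicit way in the paper.
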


The  proof is provided in Appendix C. At a high level, if we consider $\bm g$ as a generalized direction vector and $\bm s$ as the gradient of $\mathcal{L}$, each violation can be interpreted as detecting a direction where the first-order difference of $\mathcal{L}$ is significant. By introducing the assumption of smoothness, our update can result in a decrease in $\mathcal{L}$ that exceeds a constant value. Since $\mathcal{L}$ is lower bounded by assumption, the updates can terminate as described.

\subsection{Finite-Sample Results}
We have presented Algorithm \ref{alg:general} as if we have direct access to the true data distribution $\mathcal{D}$. In practice, we only have a finite calibration set $D$, whose data is sampled $i.i.d$ from $\mathcal{D}$. In this subsection, we show how a  variant of Algorithm \ref{alg:general} achieves the same goal from finite samples.

First, we introduce a useful measure which we call the \textit{dimension of the function class}, as similarly defined by \cite{kim2019multiaccuracy, deng2023happymap}. For a dataset $D$, we use $\mathbb{E}_{(\bm x, \bm h, \bm y)\sim D}$ to denote the empirical expectation over $D$. 
We need $T$ datasets in all and we assume that the whole sample size is $m$ (${m}/{T}$ for each dataset).
\begin{definition}[Dimension of the function class]\label{dimension}
    We use $d(\mathcal{G})$ to denote the dimension of class $\mathcal{G}$, defined to be a quantity such that if the sample size $m \geq C_1 \frac{d(\mathcal{G})+\log (1 / \delta)}{\alpha^2}$, then a random sample $S_m$ of $m$ elements from $\mathcal{D}$ guarantees uniform convergence over $\mathcal{G}$ with error at most $\alpha$ with failure probability at most $\delta$. That is, for any fixed $\bm f$ and fixed $\bm s$ with $\|\bm s\|_{\infty} \leq C_2$ ($C_1,C_2>0$ are universal constants):
        $$\sup _{\bm g \in \mathcal{G}}|\mathbb{E}_{(\bm x, \bm h, \bm y) \sim \mathcal{D}}[\langle\bm s(\bm f, \bm x, \bm h, \bm y, \mathcal{D}), \bm g(\bm f(\bm x), \bm x)\rangle] \nonumber\\
        -\mathbb{E}_{(\bm x, \bm h, \bm y) \sim S_m}[\langle\bm s(\bm f, \bm x, \bm h, \bm y, \mathcal{D}), \bm g(\bm f(\bm x), \bm x)\rangle]| \leq \alpha.\nonumber$$
\end{definition}
A  discussion of this definition is given in the appendix.

We now give the finite sample version of the $(\bm s,\mathcal{G}, \alpha)$-GMC Algorithm and its convergence results below. The detailed proof is in the appendix; we use the uniform convergence guarantee arising from Definition \ref{dimension} to relate the problem to its distributional counterpart.  

\begin{algorithm}[H]
\caption{$(\bm s,\mathcal{G}, \alpha)$-GMC Algorithm (Finite Sample)}\label{alg:general_sample}
    \begin{algorithmic}
        \State{\bf Input:} step size $\eta\ >0$, initialization $\bm f^{(0)}(\bm x)\in \mathcal{F}$, validation datasets $D_{[2T]}$, max iteration $T$. {\bf Initialization:} $t=0$.
        \While {$t< T, \exists \bm{g}^{(t)}\in \mathcal{G}, s.t.:
            \mathbb{E}_{(\bm x, \bm h,\bm y)\sim D_{2t-1}}[\langle \bm{s}(\bm f^{(t)}(\bm x),\bm h,\bm y, D_{2t}),\bm{g}^{(t)}(\bm f^{(t)}(\bm x), \bm x)\rangle] > \frac{3}{4}\alpha$}

        \State{Let $\bm{g}^{(t)}\in \mathcal{G}$ be an arbitrary function satisfying the condition in the while statement}
        \State{$\bm f^{(t+1)}(\bm x)={\rm Proj}_{\mathcal{F}}\left(\bm f^{(t)}(\bm x)-\eta\bm{g}^{(t)}(\bm f^{(t)}(\bm x), \bm x)\right)$}
        \State{$t=t+1$}
        \EndWhile
        \State{\bf Output:} $\bm f^{(t)}$
    \end{algorithmic}
\end{algorithm}
\begin{theorem}\label{theorem2}
    Under the assumptions 1-4 given in section 3, suppose we run Algorithm \ref{alg:general_sample} with a suitably chosen $\eta=\mathcal{O}\left(\alpha /\left(\kappa_{\mathcal{L}} B\right)\right)$ and sample size $m=\mathcal{O}\left(T \cdot \frac{d(\mathcal{G})+\log (T / \delta)}{\alpha^2}\right)$, then with probability at least $1-\delta$, the algorithm converges in $T=\mathcal{O}\left(\left(C_u-C_l\right) \kappa_{\mathcal{L}} B / \alpha^2\right)$ steps and returns a function $\bf f$ satisfying:
    $$
\mathbb{E}_{(\bm x,\bm h,\bm y)\sim \mathcal{D}}[\langle \bm s(\bm f, \bm x, \bm h, \bm y, \mathcal{D}),\bm{g}(\bm f(\bm x), \bm x)\rangle]\leq \alpha, \forall \bm{g} \in \mathcal{G}.
$$
\end{theorem}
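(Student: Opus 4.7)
The plan is to reduce to the population-level analysis of Theorem 1 via a uniform convergence argument that exploits the slack between the empirical threshold $\tfrac{3}{4}\alpha$ and the population target $\alpha$. First I would set up concentration: for each iteration $t \in [T]$, the dataset $D_{2t-1}$ is used to take the outer empirical expectation and the dataset $D_{2t}$ is used (via $\bm s(\cdot,\cdot,\cdot,\cdot,D_{2t})$) to estimate the distribution-dependent part of the functional $\bm s$. Because these two datasets are fresh and independent of each other and of $\bm f^{(t)}$ (which depends only on prior datasets), for fixed $\bm f^{(t)}$ and fixed evaluation of $\bm s$ we may invoke Definition \ref{dimension} with sample size $m/T$ per dataset, yielding uniform convergence over $\bm g \in \mathcal{G}$ with error at most $\alpha/8$ and failure probability at most $\delta/(2T)$, provided $m/T \gtrsim (d(\mathcal{G})+\log(T/\delta))/\alpha^2$. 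A similar concentration argument controls the discrepancy between $\bm s(\bm f^{(t)},\bm x,\bm h,\bm y,D_{2t})$ and $\bm s(\bm f^{(t)},\bm x,\bm h,\bm y,\mathcal{D})$ inside the inner product, again up to $\alpha/8$ with failure probability $\delta/(2T)$. Union bounding over all $2T$ datasets gives a clean event $\mathcal E$ of probability at least $1-\delta$ on which every empirical inner product encountered by the algorithm is within $\alpha/4$ of its population counterpart.

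Next I would import the progress lemma from the proof of Theorem 1: whenever the population-level inner product exceeds some constant multiple of $\alpha$, one step of the projected update $\bm f^{(t+1)}={\rm Proj}_{\mathcal{F}}(\bm f^{(t)} - \eta \bm g^{(t)}(\bm f^{(t)}(\bm x),\bm x))$ with $\eta = \Theta(\alpha/(K_{\mathcal L}B))$ decreases the potential functional $\mathcal L(\bm f,\bm h,\bm y,\mathcal D)$ by $\Omega(\alpha^2/(K_{\mathcal L}B))$. The key chain on the event $\mathcal E$ is: the while-loop triggers only when the empirical inner product exceeds $\tfrac{3}{4}\alpha$, and by uniform convergence this implies the corresponding population inner product exceeds $\tfrac{3}{4}\alpha - \tfrac{1}{4}\alpha = \tfrac{\alpha}{2}$. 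This is exactly the quantity needed to invoke the smoothness-based progress argument (Assumption (1)), combined with Assumption (2) to handle the projection onto $\mathcal F$, and Assumption (3) to bound $\mathbb E\|\bm g^{(t)}\|^2$ so that the linear decrease dominates the quadratic smoothness term.

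Combining the two ingredients, each iteration on $\mathcal E$ drives $\mathcal L$ down by $\Omega(\alpha^2/(K_{\mathcal L}B))$, so the number of iterations before the while-loop condition fails is at most $T = \mathcal O((C_u-C_l)K_{\mathcal L}B/\alpha^2)$ by Assumption (4). Upon termination, no $\bm g \in \mathcal G$ satisfies the empirical condition with slack $\tfrac{3}{4}\alpha$, and uniform convergence on $\mathcal E$ translates this back to
\[
\mathbb E_{(\bm x,\bm h,\bm y)\sim\mathcal D}[\langle \bm s(\bm f,\bm x,\bm h,\bm y,\mathcal D),\bm g(\bm f(\bm x),\bm x)\rangle] \le \tfrac{3}{4}\alpha + \tfrac{1}{4}\alpha = \alpha
\]
for every $\bm g \in \mathcal G$, which is the conclusion.

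The main obstacle I anticipate is the careful treatment of $\bm s$ as a \emph{functional} rather than a pointwise function: $\bm s$ may depend on $\mathcal D$ (for example via $\mathbb E_{\bm x}\bm f(\bm x)$ in the debiasing example), so the sample $D_{2t}$ is used to approximate this functional while $D_{2t-1}$ provides the outer expectation. Ensuring independence of these two pieces is what makes a standard concentration argument applicable, and one needs a boundedness hypothesis $\|\bm s\|_\infty \le C_2$ compatible with the applications so that Definition \ref{dimension} can be invoked. The rest of the argument then mirrors the infinite-sample proof in Appendix C, with the $\alpha/4$ slack absorbing all statistical error.
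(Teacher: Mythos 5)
Your proposal is correct and follows essentially the same route as the paper's proof: uniform convergence from Definition~\ref{dimension} puts every empirical inner product within $\alpha/4$ of its population value (union-bounded over the $T$ rounds), so each update certifies a population violation of at least $\alpha/2$ and hence a potential decrease of $\Omega(\alpha^2/(K_{\mathcal L}B))$, while termination plus the same $\alpha/4$ slack yields the final $\alpha$ bound. Your finer split of the error into an outer-expectation piece and a piece for the $\mathcal D$-dependence of $\bm s$ is a slightly more explicit accounting than the paper's, which absorbs both into the two-sample form of Definition~\ref{dimension}, but the argument is the same.
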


\section{Applications}
In this section, we explore three applications of our framework: De-biased text generation in language modeling -- where the output function is multi-dimensional and can't be addressed in other frameworks, uncertainty quantification in hierarchical classification --- in which we can offer prediction set conditional coverage guarantees, and group-wise false-positive rate control in image segmentation.  We begin by outlining the challenges related to fairness and robustness inherent to these applications. Subsequently, we illustrate how to integrate these challenges within the $(\bm s,\mathcal{G}, \alpha)$-GMC framework, enabling their resolution through Algorithm~\ref{alg:general}.

\subsection{De-Biased Text Generation}
    This section applies our framework to fair word prediction in language modelling.  We think of a language model as a function that maps prompts to a distribution over the next word. More specifically, we write $\bm x\in \mathcal{X}$ to denote a prompt, given which the language model outputs a distribution over the vocabulary, denoted by $\mathcal{Y}$. Namely, the language model generates the probability vector $\bm h(\bm x)\in \Delta \mathcal{Y}$, and then samples a word (output) following $o(\bm x)\sim \bm h(\bm x).$ Previous studies \citep{OccupationBias,hoffmann2022training} demonstrated the presence of gender bias in contemporary language models. Our objective in this section is to mitigate this issue through an approach that post-processes $\bm h(\bm x)$ to a probability distribution $\bm p(\bm x)\in \Delta\mathcal{Y}$ that has better fairness properties in specific ways. To take advantage of the information in initial language model, $\bm p$ is initialized at $\bm h$.
    
        At the high level, we aim to produce $\bm p(\bm x)$ so that the probabilities of certain groups of words remain the same whether the prompt includes male-indicating words or female-indicating words. For example, we might not want ``He was a \underline{\quad}'' to be completed with ``doctor'' more frequently than ``She was a \underline{\quad}'' to be completed with ``doctor''. We define an attribute set $U$ as a collection of specific sensitive words and $\mathcal{U}$ to be the set of all $U$, which stands for different kinds of sensitive words. Following the work by \cite{OccupationBias, hoffmann2022training}, we measure the bias of the model on sensitive attribute $U$ by $|\Prob(o(\bm x)\in U|\bm x \in F)-\Prob(o(\bm x)\in U|\bm x \in M)|$, where the probability is taken over $o(\bm x)\sim \bm p(\bm x)$, and $\bm x \in F$ and $\bm x \in M$ denotes that $\bm x$ indicates female and male pronouns respectively. 
    
Suppose the marginal distribution over prompt $\bm x$ (which is drawn uniformly from the given corpus) satisfies that $\Prob(\bm x\in F), \Prob(\bm x\in M)\geq \gamma$ for some positive constant $\gamma>0$, we get:
    \begin{align}
        &|\Prob(o(\bm x)\in U|\bm x \in F)-\Prob(o(\bm x)\in U|\bm x \in M)|  \nonumber\\
        &\leq \frac{1}{\gamma}(|\Prob(\bm x \in F)(\Prob(o(\bm x)\in U|\bm x \in F)-\Prob(o(\bm x)\in U))| 
        + |\Prob(\bm x\in M)(\Prob(o(\bm x)\in U|\bm x \in M)-\Prob(o(\bm x)\in U))|).
    \end{align}
    As a result, we only need to control the terms on the right side of (1) instead. More specifically, we want to calibrate the output so that for any subset $U\in\mathcal U\subset \mathcal Y$ ({e.g., gender-stereotyped professions}) and subgroups $A\in\mathcal A\subset\mathcal X$ ({e.g., gender-related pronouns}),
$$    {\lvert \Prob(\bm x\in A)\cdot [\Prob(o(\bm x)\in U|\bm x\in A) - \Prob(o(\bm x) \in U)]\rvert \leq \alpha}. 
$$   
    
    To better understand this fairness notion, let us consider a toy example where $\mathcal{X}=\{$he, she, his, her$\}$, $\mathcal A=\{ \{\text{he,his}\}, \{\text{she,her}\}\}$, $\mathcal{Y}=\{$lawyer, doctor, dream, nurse$\}$, $\mathcal{U} = \{\{\text{lawyer, doctor}\},\{\text{nurse}\}\}$. Our aim is to calibrate the output so that 
    $ \lvert \Prob[o(\bm x)\in \{\text{lawyer, doctor}\}|x\in \{\text{she, her}\}]- \Prob[o(\bm x) \in \{\text{lawyer, doctor}\}]\rvert\leq \alpha$ and  $ \lvert \Prob[o(\bm x)\in \{\text{lawyer, doctor}\}|x\in \{\text{he, his}\}]- \Prob[o(\bm x) \in \{\text{lawyer, doctor}\}]\rvert\leq \alpha.$ We can define $\mathcal{V}\triangleq \{(1,1,0,0), (0,0,0,1)\}$ to be the set of indicator vectors of sensitive attributes defined by $\mathcal{U}$.

Setting $\mathcal{G}\triangleq \{\mathbbm{1}_{\{\bm x\in A\}}\bm v:A\in \mathcal{A}, \bm v\in \mathcal{V}\}\cup\{-\mathbbm{1}_{\{\bm x\in A\}}\bm v:A\in \mathcal{A}, \bm v\in \mathcal{V}\}$,
this problem can be cast in the GMC framework, and leads to the following theorem:
\begin{theorem}
    Assuming that $\bm x$ is a prompt that is uniformly drawn from the given corpus, and $\bm h$ is given by any fixed language model and the size of the largest attribute set in $\mathcal{U}$ is upper bounded by $B$. With a suitably chosen $\eta = \mathcal{O}(\alpha/B)$, our algorithm halts after $T=\mathcal{O}({B}/{\alpha^2})$ iterations and outputs a function $\bm p$ satisfying: $\forall  A \in \mathcal{A},  U\in \mathcal{U}$, when $o(\bm x)\sim \bm p(\bm x),$
    $$ \sup_{A\in\mathcal A}   {\lvert \Prob(\bm x\in A)\cdot [\Prob(o(\bm x)\in U|\bm x\in A) - \Prob(o(\bm x) \in U)]\rvert \leq \alpha}. $$  
\end{theorem}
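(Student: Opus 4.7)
The plan is to cast the fairness desideratum as an instance of $(\bm s, \mathcal{G}, \alpha)$-GMC and then invoke Theorem 1. The first step is to choose $\bm s$ and $\mathcal{G}$ so that $\mathbb{E}_{\bm x}[\langle \bm s, \bm g\rangle]$ reproduces the target bias quantity $\Prob(\bm x \in A)\big[\Prob(o \in U \mid \bm x \in A) - \Prob(o \in U)\big]$. I would take $\bm s(\bm p, \bm x, \cdot, \cdot, \mathcal{D}) = \bm p(\bm x) - \mathbb{E}_{\bm x' \sim \mathcal{D}} \bm p(\bm x')$ and $\bm g(\bm p(\bm x), \bm x) = \pm\mathbbm{1}_{\{\bm x \in A\}} \bm v_U$, where $\bm v_U \in \mathcal{V}$ is the indicator vector of $U$. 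Using $\mathbb{E}_{\bm x}[\mathbbm{1}_{\{\bm x\in A\}}\langle \bm p(\bm x), \bm v_U\rangle] = \Prob(\bm x\in A)\Prob(o\in U \mid \bm x\in A)$ and $\langle \mathbb{E}\bm p, \bm v_U\rangle = \Prob(o\in U)$, the inner product collapses to the required identity; including both signs in $\mathcal{G}$ converts the one-sided $\leq \alpha$ guarantee from GMC into the two-sided absolute-value bound in the statement. This $\mathcal{G}$ matches the one displayed in the theorem.

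Second, I would verify Assumptions (1)--(4) with the variance potential $\mathcal{L}(\bm p, \mathcal{D}) = \mathbb{E}_{\bm x}\big[\tfrac{1}{2}\|\bm p(\bm x) - \mathbb{E}_{\bm x'} \bm p(\bm x')\|^2\big]$. A directional-derivative computation gives $\nabla_{\bm p} \mathcal{L} = \bm p(\bm x) - \mathbb{E}\bm p = \bm s$; expanding $\mathcal{L}(\bm p_1) - \mathcal{L}(\bm p_2) - \mathbb{E}\langle \nabla\mathcal{L}(\bm p_2), \bm p_1 - \bm p_2\rangle$ reduces to $\tfrac{1}{2}\mathbb{E}\|(\bm p_1 - \bm p_2) - \mathbb{E}(\bm p_1 - \bm p_2)\|^2 \leq \tfrac{1}{2}\mathbb{E}\|\bm p_1 - \bm p_2\|^2$, giving $K_{\mathcal{L}} = 1$ for Assumption (1). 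For Assumption (2), the identity $\mathcal{L}(\bm p) = \tfrac{1}{4}\mathbb{E}_{\bm x, \bm x' \sim \mathcal{D}}\|\bm p(\bm x) - \bm p(\bm x')\|^2$ (for i.i.d.\ $\bm x, \bm x'$) combined with the $1$-Lipschitz property of $\ell_2$-projection onto the convex set $\Delta \mathcal{Y}$ yields $\mathcal{L}(\bm p^*) \leq \mathcal{L}(\bm p)$ by applying the pointwise contraction inside the expectation. Assumption (3) follows from $\|\mathbbm{1}_{\{\bm x\in A\}}\bm v_U\|^2 \leq |U| \leq B$, and Assumption (4) holds with $C_l = 0$ and $C_u = \mathcal{O}(1)$ since each $\bm p(\bm x) \in \Delta\mathcal{Y}$ has $\ell_2$-norm at most one.

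Finally, applying Theorem 1 with these constants gives step size $\eta = \mathcal{O}(\alpha/B)$, iteration count $T = \mathcal{O}(B/\alpha^2)$, and the guarantee $\mathbb{E}_{\bm x}[\langle \bm s, \bm g\rangle] \leq \alpha$ for every $\bm g \in \mathcal{G}$, which by the identity in the first paragraph is exactly the stated fairness bound. I expect the main obstacle to be the initial design step: choosing $\bm s$ and $\mathcal{G}$ so that the target bias expression is literally the inner product $\langle \bm s, \bm g\rangle$ \emph{and} the induced smoothness/boundedness constants line up with the theorem's rates (so that the iteration bound reads $\mathcal{O}(B/\alpha^2)$ rather than something worse). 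Once that design is fixed, the remaining verifications are essentially mechanical, with Assumption (2) being the only non-obvious one; it reduces cleanly to the contraction property of $\ell_2$-projection via the pairwise-distance rewriting of the variance.
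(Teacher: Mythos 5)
Your proposal is correct and follows essentially the same route as the paper's proof: the same variance potential $\mathcal{L}(\bm p)=\tfrac12\mathbb{E}_{\bm x}\lVert\bm p(\bm x)-\mathbb{E}_{\bm x}\bm p(\bm x)\rVert^2$ with $\nabla_{\bm p}\mathcal{L}=\bm s$, the same $1$-smoothness computation, the same use of convexity of $\Delta\mathcal{Y}$ and nonexpansiveness of projection for Assumption (2), and the same invocation of Theorem 1. If anything, your write-up is slightly more careful than the paper's appendix on two points: you make explicit the pairwise-distance identity needed to pass from pointwise contraction of the projection to $\mathcal{L}(\bm p^*)\le\mathcal{L}(\bm p)$ (the paper only cites the projection lemma), and you correctly bound $\lVert\bm v_U\rVert^2\le|U|\le B$ for Assumption (3), which is what actually produces the $B$-dependent rates $\eta=\mathcal{O}(\alpha/B)$ and $T=\mathcal{O}(B/\alpha^2)$ in the statement.
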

For the finite-sample counterpart, by applying theorem \ref{theorem2}, the sample complexity required in this setting is $\mathcal{O}(\frac{\log(2|\mathcal{U}||\mathcal{A}|)+\log(\frac{1}{\delta})}{\alpha^2})$.
\subsection{Prediction-Set Conditional Coverage in Hierarchical Classification}\label{sec:classification}
Hierarchical classification is a machine learning task where the labels are organized in a hierarchical tree structure \citep{tieppo2022hierarchical}.
More specifically, at the most granular level, predictions are made using labels on the leaves of the tree. These leaves are grouped together into semantically meaningful categories through their parent nodes, which are, in turn, grouped together through their parents, and so on up to the root of the tree. Such a tree structure allows us---when there is uncertainty as to the correct label---to predict intermediate nodes, which correspond to predicting \emph{sets} of labels --- the set of leaves descended from the intermediate node --- giving us a way to quantify the uncertainty of our predictions. Our goal is to produce such set-valued predictions that have a uniform coverage rate conditional on the prediction we make, where a prediction set is said to ``cover'' the true label if the true label is a descendent of (or equal to) the node we predicted.

For example, in a $K$-class hierarchical text classification problem, our input $\bm x\in \mathcal{X}$
is a document and the label is a leaf node $y$ on a classification tree with nodes $V$ and edges $E$. For simplicity, set $V = \{1,2,...,|V|\}$ where the first $K$ indices $\{1,2,..,K\}$ denote leaf nodes (so the groundtruth label $y \in \{1, ..., K\}$). The tree is of depth $H$. For a given single-class classification model $\bm h : \bm x \to [0,1]^K$, let $u(\bm x)\triangleq \arg\max_k h_k(\bm x)$ denote the candidate with the highest score over all leaf nodes according to $\bm h$. $u(\bm x)$ here corresponds to the most natural point prediction we might make given $\bm h$. 
\begin{figure}[H]
    \centering
    \includegraphics[width=.5\textwidth]{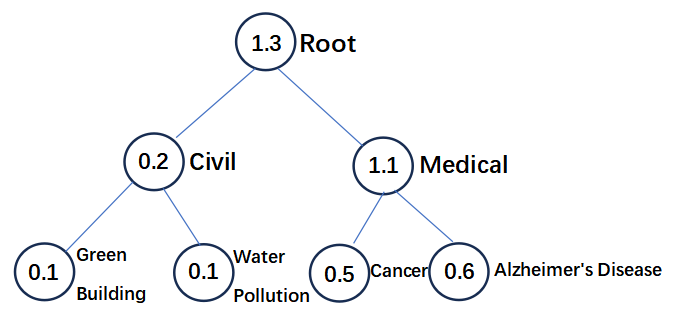}
    \caption{A demo of hierarchical text classification using a subset of labels from the \textit{Web of Science} dataset. \citep{kowsari2017HDLTex}.}
\end{figure}
As a concrete example, in the tree diagram above, we map the set $\{1,2,3,4,5,6,7\}$ to represent the categories: Green Building, Water Pollution, Cancer, Alzheimer's Disease, Civil, Medical and Root. Consider a document $\bm x$ with the true label `Cancer' and an initial  model predicting scores $\bm h(\bm x)=(0.1, 0.1, 0.5, 0.6)$. If we used the scores to make a point prediction, we would be incorrect --- the highest scoring label $u(\bm x)$ is ``Altzheimer's disease'', and is wrong: $u(\bm x)\neq y$. If we output  the parent node ( `Medical') instead, our prediction would be less specific (a larger prediction set, here corresponding to both ``Cancer'' and ``Alzheimer's Disease''), but it would cover the true label. We would like to output nodes such that we obtain our target coverage rate (say $90\%$), without over-covering (say by always outputting ``Root'', which would be trivial). Traditional conformal prediction methods (see \cite{angelopoulos2021gentle} for a gentle introduction) give prediction sets that offer marginal guarantees of this sort, but not prediction-set conditional guarantees: i.e. they offer that for 90\% of examples, we produce a prediction set that covers the true label. Recent applications of multicalibration related techniques (\citep{jung2021moment,gupta2022online,bastani2022practical,jung2022batch,deng2023happymap,gibbs2023conformal} are able to give ``group conditional'' coverage guarantees which offer (e.g.) 90\% coverage as averaged over examples within each of a number of intersecting groups, but once again these methods are not able to offer prediction-set conditional guarantees. Prediction set conditional guarantees promise that for each prediction set that we produce, we cover 90\% of example labels, \emph{even conditional on the prediction set we offer}. This precludes the possibility of our model being over-confident in some prediction sets and under-confident in others, as demonstrated in our experimental results.

We now define some useful functional notation. Let $\bm A: V \to V^H$ return the set of all the ancestor nodes of the input node. Let $q: V \times V \to V$ compute the nearest common ancestor of its two input nodes. Let $\bm R: \mathcal{X} \to \mathbbm{R}^{|V|}$ be the function that computes for each node $i$, $R_i$, the sum of the raw scores $\bm h(\bm x)$ assigned to each leaf that is a descendent of node $i$ (or itself if $i$ is a leaf).  When needed, we may randomize $\bm R$ by letting $\bm r_i(\bm x)\triangleq \bm R_i(\bm x) + \epsilon_i(\bm x)$, where $\epsilon(\bm x)$ is an independent random variable with zero-mean and constant variance. 
We define a natural method to choose a node $o(\bm x)$ to output given a scoring function $\bm h(\bm x)$ and a threshold function $\lambda(\bm x)$. We define $o(\bm x)\triangleq \arg\min_{v}\{d(v):v\in \bm A(u(\bm x)), \bm r_v< \lambda(\bm x)\}$, where $d(v)$ denotes the depth of the node $v$ in the tree. In other words, we output the highest ancestor $i$ of $u(\bm x)$ (which we recall is the point prediction we would make given $\bm h$ alone) whose cumulative score $\bm r_i$ is below some threshold --- which we will select to obtain some target coverage probability. Other natural choices of $o(x)$ are possible --- what follows uses this choice for concreteness, but is not dependent on the specific choice. 

Recall that an output covers the label if it is the ancestor of the label or the label itself. Our goal is to find a $\lambda(\bm x)$, such that the rate at which the output covers the label is roughly equal to a given target $\sigma$, not just overall, but conditional on the prediction set we output lying in various sets $\mathcal U\subset 2^V$:
$$|\mathbb{E}_{(\bm x,\bm h,y)\sim \mathcal{D}}[\mathbbm{1}_{\{o(\bm x)\in U\}}(\sigma - \mathbbm{1}_{\{o(\bm x) \text{ covers } y\}})]|\leq \alpha,\forall U\in \mathcal{U}.$$

Back to our example, we can specify $\mathcal{U}$ in various ways. For example, we can set $\mathcal{U}=\{\{1,2,5\},\{3,4,6\}\}$ to require equal coverage cross the parent categories `Civil' and `Medical'. Or, we can set $\mathcal{U} = \{\{1\},\{2\},\ldots,\{6\},\{7\}\}$ to obtain our target coverage rate $\sigma$ conditionally on the prediction set we output for \emph{all possible} prediction sets we might output.

We set $\mathcal{G} \triangleq \{\mathbbm{1}_{\{o(\bm x)\in U\}}:U\in \mathcal{U}\}\cup\{-\mathbbm{1}_{\{o(\bm x)\in U\}}:U\in \mathcal{U}\}$, fitting this problem into our GMC framework: 
\begin{equation*}
    |\mathbb{E}_{(\bm x,\bm h,y)\sim \mathcal{D}}[g(o(\bm x))(\sigma - \mathbbm{1}_{\{o(\bm x) \text{ covers } y\}})]| \leq \alpha, \forall g \in \mathcal{G}.
\end{equation*}
Using $\sum_{i=1}^K\mathbbm{1}_{\{\bm r_{q(i,u)}(\bm x)<\lambda\}}\mathbbm{1}_{\{y=i\}}=\mathbbm{1}_{\{o(\bm x) \text{ covers } y\}}$ and applying Algorithm~\ref{alg:general}, we obtain the following theorem:
\begin{theorem}\label{prediction-set}
    Assume (1). $\forall u,\forall i\in V, f_{ r_i|\bm x}(u) \leq K_p$, where $f_{r_i|\bm x}(u)$ denotes the density function of $r_i$ conditioned on $\bm x$;  (2). There exists a real number $M>0$ such that $\forall i\in V, r_i\in [-M, M]$. With a suitably chosen $\eta = \mathcal{O}(\alpha/K_P)$, our algorithm halts after $T=\mathcal{O}(K_PM/\alpha^2)$ iterations and outputs a function $\lambda$ satisfying that $\forall U\in \mathcal{U},$
$$
        |\mathbb{E}_{(\bm x,\bm h,y)\sim \mathcal{D}}[\mathbbm{1}_{\{o(\bm x)\in U\}}(\sigma - \mathbbm{1}_{\{o(\bm x) \text{ covers } y\}})]|\leq \alpha.
$$
\end{theorem}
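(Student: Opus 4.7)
The plan is to instantiate $(\bm s,\mathcal{G},\alpha)$-GMC with $\bm f=\lambda$, feasible set $\mathcal{F}=[-M,M]$, scoring functional $\bm s(\lambda,\bm x,\bm h,y,\mathcal{D})=\mathbbm{1}_{\{o(\bm x)\text{ covers }y\}}-\sigma$ (the sign opposite to the excerpt, which is innocuous because $\mathcal{G}$ is closed under negation so the GMC constraint is unchanged), and the $\mathcal{G}$ already specified there, and then to invoke Theorem~1. Using the given identity
$\mathbbm{1}_{\{o(\bm x)\text{ covers }y\}}=\sum_i\mathbbm{1}_{\{y=i\}}\mathbbm{1}_{\{r_{q(i,u(\bm x))}(\bm x)<\lambda(\bm x)\}}$,
the vector $\bm s$ is the pointwise derivative in $\lambda(\bm x)$ of the hinge loss $L(\lambda,\bm x,\bm h,y)=(\lambda(\bm x)-r_{q(y,u(\bm x))}(\bm x))_+-\sigma\,\lambda(\bm x)$, so I take as potential functional
\[
\mathcal{L}(\lambda,\bm h,y,\mathcal{D})=\mathbb{E}_{(\bm x,\bm h,y)\sim\mathcal{D}}\bigl[(\lambda(\bm x)-r_{q(y,u(\bm x))}(\bm x))_+-\sigma\,\lambda(\bm x)\bigr],
\]
which is convex in $\lambda$ and bounded because $r\in[-M,M]$.

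I would then verify the four GMC assumptions. For Assumption~1 the gradient identity follows by differentiating $L$ pointwise, and the smoothness constant is $K_\mathcal{L}=K_P$: conditional on $\bm x$, the expected partial derivative of $L$ in $\lambda$ equals $\sum_i P(y=i\mid\bm x)\,F_{r_{q(i,u)}\mid\bm x}(\lambda)-\sigma$, whose derivative in $\lambda$ is $\sum_i P(y=i\mid\bm x)\,f_{r_{q(i,u)}\mid\bm x}(\lambda)\leq K_P$ by hypothesis~(1). For Assumption~2 on $\mathcal{F}=[-M,M]$: when $\lambda(\bm x)>M$ we have $r_{q(y,u)}\leq M<\lambda(\bm x)$ so $\partial_\lambda L=1-\sigma>0$ and projection down to $M$ decreases $L$ pointwise; when $\lambda(\bm x)<-M$ we have $\partial_\lambda L=-\sigma<0$ and projection up to $-M$ again decreases $L$ pointwise; taking expectations gives $\mathcal{L}(\lambda^*)\leq\mathcal{L}(\lambda)$. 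Assumption~3 holds with $B=1$ since every $\bm g\in\mathcal{G}$ is $\pm1$-valued. For Assumption~4, direct evaluation on $\lambda,r\in[-M,M]$ gives $L\in[-\sigma M,(2-\sigma)M]$, so $C_l=-M$, $C_u=2M$, and $C_u-C_l=\mathcal{O}(M)$.

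Plugging $K_\mathcal{L}=K_P$, $B=1$, and $C_u-C_l=\mathcal{O}(M)$ into Theorem~1 yields the claimed step size $\eta=\mathcal{O}(\alpha/K_P)$, the iteration bound $T=\mathcal{O}(K_P M/\alpha^2)$, and the output guarantee $|\mathbb{E}[\langle\bm s,\bm g\rangle]|\leq\alpha$ for all $\bm g\in\mathcal{G}$; unpacking $\bm g=\pm\mathbbm{1}_{\{o(\bm x)\in U\}}$ recovers exactly the prediction-set-conditional coverage inequality in the theorem statement. The main technical obstacle I anticipate is the smoothness half of Assumption~1: each per-sample $L$ is only piecewise linear in $\lambda$, with a jump derivative at $\lambda=r_{q(y,u)}$, so the functional $\mathcal{L}$ inherits smoothness only after integrating against the randomization $\epsilon$ in $\bm r$; a careful exchange of differentiation and expectation, justified by the density bound $K_P$, is what converts the pointwise non-smoothness of each $L$ into a genuine functional $K_P$-smoothness of $\mathcal{L}$.
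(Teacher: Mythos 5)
Your proposal is correct and follows essentially the same route as the paper's proof: both instantiate GMC with $\mathcal{F}=[-M,M]$, a piecewise-linear (min/hinge) potential whose pointwise $\lambda$-derivative recovers $\bm s$ (yours is the sign-flipped version of the paper's $\sigma\lambda-\min\{\lambda,h_y\}$, which is immaterial since $\mathcal{G}$ is symmetric), verify the projection assumption by the same three-case analysis at $\pm M$, obtain $K_{\mathcal{L}}=K_P$ from the conditional density bound applied to $\mathbb{E}_{\bm h\mid\bm x}[\partial_u s']$, and take $B=1$ and $C_u-C_l=\mathcal{O}(M)$ before invoking Theorem~1. Your closing observation about needing to smooth the per-sample nonsmoothness by averaging over the randomization in $\bm r$ is exactly the mechanism the paper relies on via its remark on the degenerated assumptions.
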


Applying theorem \ref{theorem2}, we can see that the sample complexity for the finite-sample version of the algorithm  is $\mathcal{O}(\frac{\log(2|\mathcal{U}|)+\log(\frac{1}{\delta})}{\alpha^2})$.
\subsection{Fair FNR Control in Image Segmentation} 
In image segmentation, the input is an image of $m=w\times l$ ($w$ for width and $l$ for length) pixels and the task is to distinguish the pixels corresponding to certain components of the image, e.g., tumors in a medical image, eyes in the picture of a face, etc. As pointed out by \cite{lee2023investigation}, gender and racial biases are witnessed when evaluating image segmentation models. Among the common evaluations of image segmentation, we consider the False Negative Rate (FNR), defined as $\frac{\text{False Negatives}}{\text{False Negatives} + \text{True Positives}}.$ 
In image segmentation when $O,O'$ denotes the set of the actual selected segments and the predicted segments respectively, $\text{FNR}=1-\frac{|O\cap O'|}{|O|}.$ 

We write $\bm x \in \mathcal{X}$ to denote the input, which includes both image and demographic group information and $\bm y\in \{0,1\}^m$ to denote the label, which is a binary vector denoting the true inclusion of each of the $m$ pixels. To yield the prediction of $\bm y$, namely $\bm {\hat{y}} \in \{0,1\}^m$, a scoring function $\bm h(\bm x)\in \mathbb{R}^m$ and a threshold function $\lambda(\bm x)$ are needed, so that $\hat{y}_i = \mathbbm{1}_{\{h_i(\bm x)>\lambda(\bm x)\}}$ for $i\in [m]$. As in Section~\ref{sec:classification}, for technical reasons we may randomize $h_i$ by perturbing it with a zero-mean random variable of modest scale.
Our objective is to determine the threshold function $\lambda(\bm x)$.
  
In the context of algorithmic fairness in image segmentation, one specific application is face segmentation, where the objective is to precisely identify and segment regions containing human faces within an image. The aim is to achieve accurate face segmentation while ensuring consistent levels of precision across various demographic groups defined by sensitive attributes, like gender and race. Thus, our objective is to determine the function $\lambda(\bm x)$ that ensures multi-group fairness in terms of the FNR --- a natural multi-group fairness extension of the FNR control problem for image segmentation studied by \cite{angelopoulos2023conformal}.

Letting $\mathcal{A}$ be the set of sensitive subgroups of $\mathcal{X}$, our goal is to ensure that the FNR across different groups are approximately $(1-\sigma)$ for some prespecified $\sigma>0$:
\begin{equation*}
    |\mathbb{E}_{(\bm x,\bm h,\bm y)\sim \mathcal{D}}[\mathbbm{1}_{\{\bm x\in A\}}(1-\frac{|O\cap O'|}{|O|}-\sigma)]|\leq \alpha,\quad  \forall A\in \mathcal{A}.
\end{equation*}
We can write $|O\cap O'|=\sum_{i=1}^my_i\mathbbm{1}_{\{h_i(\bm x)>\lambda(\bm x)\}}$, so the object is converted to 
    \begin{align}
    \sup_{A\in \mathcal{A}}|\mathbb{E}_{(\bm x,\bm h,\bm y)\sim \mathcal{D}}[\mathbbm{1}_{\{x\in A\}}(1-\frac{\sum_{i=1}^my_i\mathbbm{1}_{\{h_i(\bm x)>\lambda(\bm x)\}}}{\sum_{i=1}^my_i}-\sigma)]|\leq \alpha. \nonumber
\end{align}

Let $s(\lambda,\bm x, \bm h,\bm y)=1-\frac{\sum_{i=1}^my_i\mathbbm{1}_{\{h_i(\bm x)>\lambda(\bm x)\}}}{\sum_{i=1}^my_i}-\sigma$ and  $\mathcal{G} \triangleq \{\pm \mathbbm{1}_{\{\bm x\in A\}}:A\in \mathcal{A}\}$. Rewriting the inequality we get:
\begin{equation*}
    \sup_{g\in \mathcal{G}}\mathbb{E}_{(\bm x,\bm h,\bm y)\sim \mathcal{D}}[g(\lambda(\bm x), \bm x)s(\lambda, \bm x,\bm h,\bm y)]\leq \alpha. 
\end{equation*}
    Cast in the GMC framework, we obtain the following result:
\begin{theorem}\label{thm:image_seg}
     Assume (1) For all $i\in[n]$, $|h_i|\le M$ for some universal constant $M>0$; (2) the density function of $h_i$ conditioned on $x$ is upper bounded by some universal constant $K_p>0$.
     Let $C$ be the set of sensitive subgroups of $\mathcal{X}$. Then with a suitably chosen $\eta = \mathcal{O}(\alpha/(K_P))$, the algorithm halts after $T=\mathcal{O}(\frac{2K_PM}{\alpha})$ iterations and outputs a function $\lambda$ satisfying: 
    \begin{equation*}
        |\mathbb{E}_{(\bm x,\bm h,\bm y)\sim \mathcal{D}}[\mathbbm{1}_{\{\bm x\in A\}}(1-\frac{|O\cap O'|}{|O|}-\sigma)]|\leq \alpha,\quad  \forall A\in \mathcal{A}.
    \end{equation*}
\end{theorem}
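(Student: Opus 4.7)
The plan is to fit this problem into the GMC framework and invoke Theorem~1; the main technical content is constructing a potential functional $\mathcal L$ with $\nabla_\lambda \mathcal L = s$ and verifying its $K_p$-smoothness. I would antidifferentiate $s$ pointwise in $\lambda$, using the identity $\frac{d}{d\lambda}\min(h_i,\lambda)=\mathbbm{1}_{\{h_i>\lambda\}}$ (Lebesgue-a.e.), and take
\[\mathcal L(\lambda,\bm h,\bm y,\mathcal D)=\mathbb E_{(\bm x,\bm h,\bm y)\sim \mathcal D}\Big[(1-\sigma)\lambda(\bm x)-\sum_{i=1}^m \tfrac{y_i}{\sum_{j=1}^m y_j}\min(h_i(\bm x),\lambda(\bm x))\Big].\]
A direct check against Definition~\ref{derivative} gives $\nabla_\lambda \mathcal L(\bm x)=(1-\sigma)-\sum_i \tfrac{y_i}{\sum_j y_j}\mathbbm{1}_{\{h_i(\bm x)>\lambda(\bm x)\}}=s(\lambda,\bm x,\bm h,\bm y)$, which is half of Assumption~1.

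The main obstacle is smoothness: pointwise $\min(h_i,\lambda)$ is only piecewise linear, so $\mathcal L$ is not classically smooth without averaging. This is exactly where the randomization of $h_i$ and the density hypothesis (2) earn their keep. For each fixed $\bm x$ one has $\frac{d}{d\lambda}\mathbb E[\min(h_i,\lambda)\mid\bm x]=\mathbb P(h_i>\lambda\mid\bm x)$ and $\frac{d^2}{d\lambda^2}\mathbb E[\min(h_i,\lambda)\mid\bm x]=-f_{h_i\mid\bm x}(\lambda)$, whose magnitude is at most $K_p$. Because the convex weights $y_i/\sum_j y_j$ are nonnegative and sum to $1$, the Hessian of $\lambda\mapsto\mathcal L$ has operator norm at most $K_p$ everywhere, yielding $K_\mathcal L=K_p$-smoothness. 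A minor subtlety is that the density bound needs to survive conditioning on $\bm y$ in addition to $\bm x$, which can be handled either by reading Assumption~(2) as $f_{h_i\mid\bm x,\bm y}\leq K_p$ or by a short iterated-expectation argument once one notes that the randomization $\epsilon_i(\bm x)$ is independent of $\bm y$.

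The remaining assumptions and the final accounting are routine. Setting $\mathcal F=[-M,M]$ (justified by $|h_i|\leq M$), $\mathcal L$ is convex in $\lambda$ and its gradient satisfies $\nabla_\lambda\mathcal L\leq -\sigma<0$ for $\lambda<-M$ and $\nabla_\lambda\mathcal L\geq 1-\sigma>0$ for $\lambda>M$, so the minimizer lies in $\mathcal F$ and projection onto $\mathcal F$ cannot increase $\mathcal L$, giving Assumption~2; every $g\in\mathcal G$ is a $\pm 1$-valued indicator, so $B=1$ in Assumption~3; and $|\lambda|,|h_i|\leq M$ renders $\mathcal L$ uniformly bounded by a constant multiple of $M$, so $C_u-C_l=\mathcal O(M)$ in Assumption~4. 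Feeding $K_\mathcal L=K_p$, $B=1$, and $C_u-C_l=\mathcal O(M)$ into Theorem~1 produces the claimed step size and iteration count, and unfolding the definitions of $s$ and $\mathcal G$ recovers the stated FNR inequality.
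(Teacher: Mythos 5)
Your proposal is correct and follows essentially the same route as the paper: the same potential functional $\mathcal L(\lambda)=\mathbb E[(1-\sigma)\lambda(\bm x)-\sum_i \frac{y_i}{\sum_j y_j}\min\{\lambda(\bm x),h_i(\bm x)\}]$, the same use of the conditional density bound to get $K_p$-smoothness of the averaged objective, and the same constants $B=1$, $C_u-C_l=\mathcal O(M)$ fed into Theorem~1 (your convexity-plus-gradient-sign justification of the projection assumption is a cosmetic variant of the paper's explicit three-case comparison). Your remark about the density bound needing to survive conditioning on $\bm y$ is exactly the point the paper handles by an iterated expectation at the end of its smoothness check.
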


Similar to the previous two applications, by applying Theorem \ref{theorem2} for the finite-sample version of the algorithm, the sample complexity required is $\mathcal{O}(\frac{\log(2|\mathcal{A}|)+\log(\frac{1}{\delta})}{\alpha^2})$.
 
 We note that equalizing false negative rates across groups can be achieved trivially by setting $\lambda$ to be large enough so that the FNR is equalized (at 0) --- which would of course destroy the accuracy of the method. Thus when we set an objective like this, it is important to empirically show that not only does the method lead to low disparity across false negative rates, but does so without loss in accuracy. The experiments that we carry out in Section \ref{sec:exp} indeed bear this out.

\section{Experiments}\label{sec:exp} 
In this section, we conduct numerical experiments and evaluate the performance of our algorithms within each application from both the fairness and accuracy perspectives. We compare the results with baseline methods to assess their effectiveness. The code can be found in the supplementary material. For more detailed experiment settings and additional results, please refer to Appendix~\ref{sec:experimental details}.
\subsection{De-Biased Text Generation}
In text generation, we consider two datasets and run experiments separately.  The first dataset is the corpus data from \citet{fairtext}, which extracts sentences with both terms indicative of biases (e.g., gender indicator words) and attributes (e.g., professions) from real-world articles. The second dataset is made up of synthetic templates based on combining words indicative of bias targets and attributes with simple placeholder templates, e.g., ``The woman worked as ...''; ``The man was known for ...'', constructed in \citep{lu2019gender}. 

Then, we define two kinds of terms indicative of bias targets: female-indicator words and male-indicator words; we also define six types of attributes: female-adj words, male-adj words, male-stereotyped jobs, female-stereotyped jobs, pleasant words, and unpleasant words, by drawing on existing word lists in the fair text generation context \citep{caliskan2017semantics} \citep{fairtext_data}. 

Each input $\bm x$ is a sentence where sensitive attributes are masked. We use the BERT model \citep{bert} to generate the initial probability distribution over the entire vocabulary for the word at the masked position, denoted by $\bm h(\bm x)$. We then use our algorithm to post-process  $\bm h(\bm x)$ and obtain the function $\bm p(\bm x)$, which is the calibrated probability of the output. We define two sets of prompts: $A_{\text{female}}$ and $A_{\text{male}}$ be the set of prompts containing female-indicator and male-indicator words, respectively. We aim to control the gender disparity gap $\lvert \Prob(\bm x\in A)\cdot [\Prob(o(\bm x)\in U|\bm x\in A) - \Prob(o(\bm x) \in U)]\rvert$ for $A\in\{A_{\text{female}}, A_{\text{male}}\}$.

 Figure~\ref{fig:lm} plots the disparity gap for $A=A_{male}$ (the result for $A=A_{female}$ is deferred to the appendix due to space constraints). It is evident that our post-processing technique effectively limits the disparity between the probabilities of outputting biased terms related to different gender groups, ensuring that it remains consistently below a specified threshold value of $\alpha=0.002$ (we will further discuss the way of choosing $\alpha$ in the Appendix~\ref{sec:experimental details}). Additionally, we assess the cross-entropy loss between the calibrated output distribution and the corresponding labels. Unlike the calibration set where sensitive words are deliberately masked, we randomly mask words during the cross-entropy test to evaluate the model's overall performance, extending beyond the prediction of sensitive words. The cross-entropy of the test set is $9.9291$ before post-processing and $9.9285$ after it, indicating that our algorithm does not reduce the accuracy of the model while reducing gender disparities. We would like to note that our algorithm is not designed to enhance accuracy but to improve fairness while ensuring that the performance of cross-entropy does not deteriorate too much.
\begin{figure}[H]
    \centering
    \includegraphics[width=.7\textwidth]{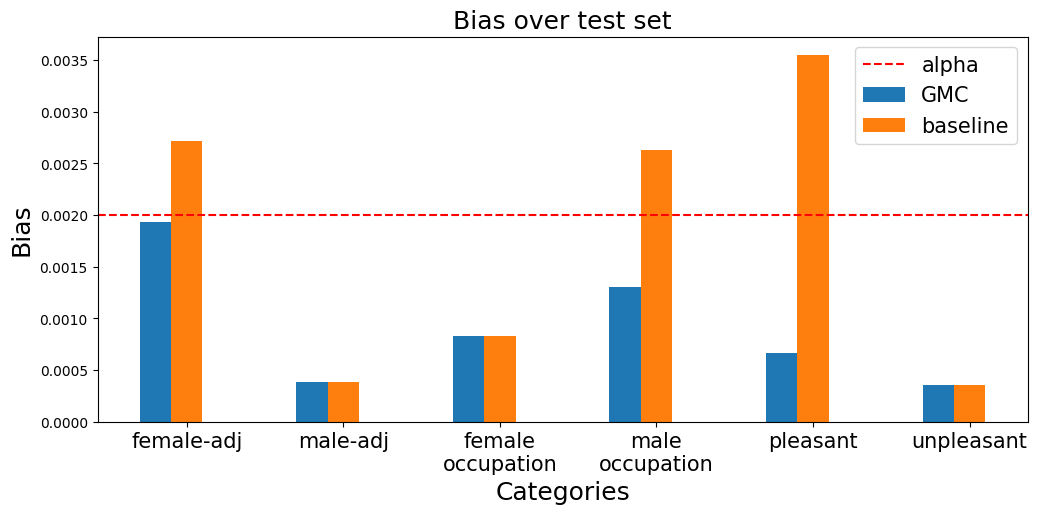}
    \caption{The bias on outputting different types of sensitive attributes measured on the corpus data. The results for the synthetic data are deferred to the appendix.}
    \label{fig:lm}
\end{figure}
\vspace{-0.5cm}

\subsection{Prediction-Set Conditional Coverage in Hierarchical Classification}
For hierarchical classification, we use the \textit{Web of Science} dataset \citep{kowsari2017HDLTex} that contains $46,985$ documents with $134$ categories including $7$ parent categories. We choose HiAGM \citep{wang2022hpt} as the network to generate the initial scoring. Our algorithm is then applied to find the threshold function that yields a fair output. 

We set our coverage target to be $\sigma = 0.95$ with a tolerance for coverage deviations of  $\alpha=0.025$. Equivalently put, our goal is that for each of the predictions, we aim to cover the true label with probability $95 \pm 2.5\%$, even \emph{conditional on the prediction we make}. We choose naively outputting the leaf node (denoted as ``unprocessed'' in the figure) as one baseline and the split conformal method \citep{angelopoulos2023conformal} as another baseline. Figure \ref{fig:coverage} shows that our method achieves coverage within the target tolerance for all predictions, while the two baselines fail to satisfy the coverage guarantee for predicting 'CS' and 'Medical'.

\begin{figure}[H]
    \centering
    \includegraphics[width=.9\textwidth]{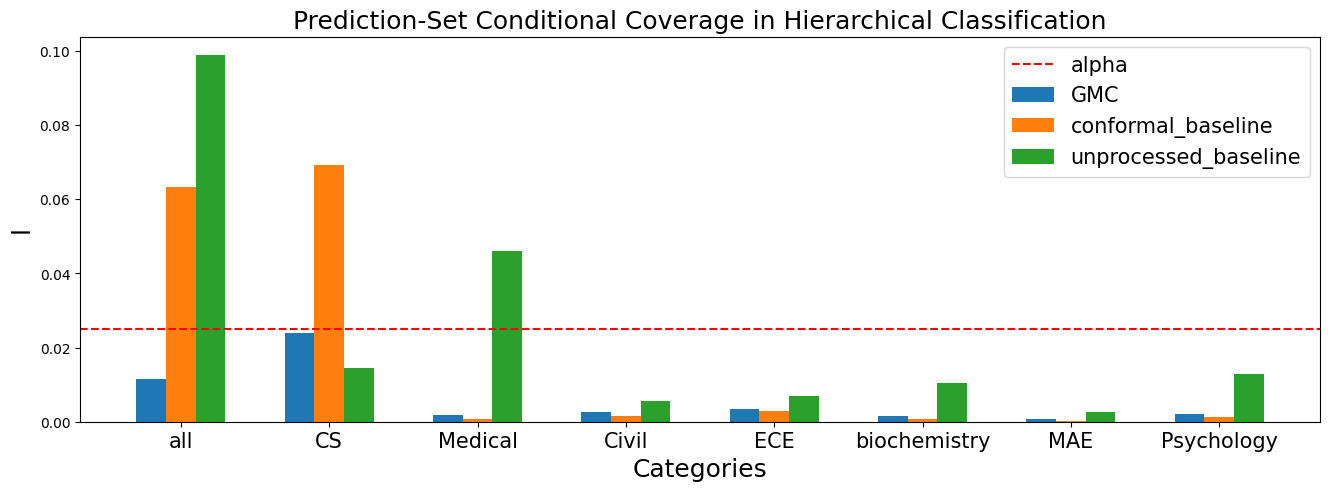}
    \caption{The deviation of prediction-set conditional coverage from the target.}
    \label{fig:coverage}
\end{figure}
\subsection{Fair FNR Control in Image Segmentation}
We use the FASSEG \citep{facesegmentation} dataset and adopt the U-net \citep{Unet2015} network to generate the initial scoring function for each pixel, representing the predicted probability of this pixel corresponding to the signal. The dataset contains $118$ human facial images and their semantic segmentations. We set our target FNR to be $\sigma=0.075$ with a tolerance for deviations of $\alpha=0.005$ and calibrate the FNR across different gender subgroups and racial subgroups. In addition, we compare with the method proposed in \citep{angelopoulos2023conformal} that controls on-average FNR in a finite-sample manner based on the split conformal prediction method. The results yielded by U-net and the split conformal are plotted as baselines for comparison in Figure~\ref{FNR}. Our algorithm demonstrates its effectiveness as the deviations of the FNRs of GMC from the target $\alpha$ across all subgroups are controlled below $\sigma$, while the baselines are found to perform poorly on male and white subgroups. Since equalizing FNR does not necessarily imply accuracy, we compute the accuracy of our model's output together with that of the baseline. The accuracy of our model, measured as the ratio of correctly predicted pixels to the total number of pixels, is $0.86$. In comparison, the baseline models achieve an accuracy of $0.84$ and $0.92$, respectively. This result suggests that our algorithm empirically yields significant gains in mitigating FNR disparities without a significant sacrifice in accuracy.
\begin{figure}[H]
    \centering
    \includegraphics[width=.7\textwidth]{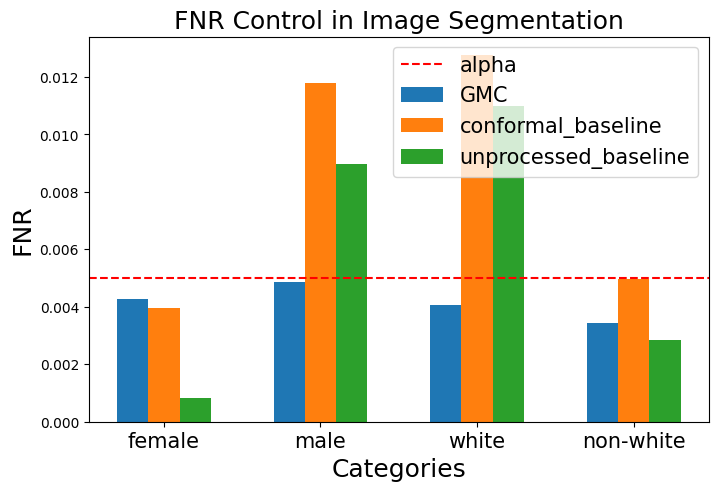}
    \caption{The deviation of the false negative rate from the target in image segmentation.}\label{FNR}
\end{figure}
\newpage
\bibliography{reference}
\bibliographystyle{icml2024}

\newpage
\onecolumn
\appendix

\section{Detailed Discussion of the dimension of the function class}
We first state the concentration bounds to use in this section.
\setcounter{theorem}{5}
\begin{theorem}[Generalized Chernoff Bound]\label{the:Chernoff}
    Let $\{X_i:i\in [n]\}$ be independent random variables satisfying $X_i\in [a_i, b_i]$. Let $X=\sum_{j=1}^n X_j, \mu = \mathbb{E}[X]$, then for $\lambda > 0$,
    $$\Prob(|X-\mu|\leq \lambda)\leq 2exp(-\frac{2\lambda^2}{\sum_{j=1}^n(b_i-a_i)^2}).$$
\end{theorem}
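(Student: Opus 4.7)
The plan is to prove this as the standard two-sided Hoeffding inequality (the ``Generalized Chernoff Bound'' here is really Hoeffding's inequality, since the $X_i$ are bounded rather than $\{0,1\}$-valued). The bound should be interpreted as $\Prob(|X - \mu| \geq \lambda) \leq 2 \exp\!\bigl(-2\lambda^2 / \sum_j (b_j - a_j)^2\bigr)$, with the inequality direction in the probability being a typo. The argument proceeds via the Chernoff/exponential moment method applied to bounded random variables, combined with a union bound to get the two-sided statement.

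First I would handle the one-sided tail $\Prob(X - \mu \geq \lambda)$. For any $t > 0$, by Markov's inequality applied to $e^{t(X-\mu)}$, we get $\Prob(X - \mu \geq \lambda) \leq e^{-t\lambda}\,\mathbb{E}[e^{t(X-\mu)}]$. Because the $X_i$ are independent, the MGF factorizes: $\mathbb{E}[e^{t(X-\mu)}] = \prod_{j=1}^n \mathbb{E}[e^{t(X_j - \mu_j)}]$ where $\mu_j = \mathbb{E}[X_j]$. The key input is then Hoeffding's lemma: for any zero-mean random variable $Y \in [a,b]$, one has $\mathbb{E}[e^{tY}] \leq e^{t^2(b-a)^2/8}$. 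Applying this to each centered $X_j - \mu_j \in [a_j - \mu_j,\, b_j - \mu_j]$ (which has range of length $b_j - a_j$) yields $\mathbb{E}[e^{t(X-\mu)}] \leq \exp\!\bigl(t^2 \sum_j (b_j - a_j)^2 / 8\bigr)$.

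Next I would optimize over $t$. Substituting back gives $\Prob(X - \mu \geq \lambda) \leq \exp\!\bigl(-t\lambda + t^2 S/8\bigr)$ where $S = \sum_j (b_j - a_j)^2$. The right-hand side is minimized at $t^* = 4\lambda/S > 0$, yielding $\exp(-2\lambda^2/S)$. An identical argument applied to $-X$ (noting that $-X_j \in [-b_j, -a_j]$ has the same range length) gives $\Prob(X - \mu \leq -\lambda) \leq \exp(-2\lambda^2/S)$. A union bound then gives the two-sided factor of $2$.

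The only nontrivial step is Hoeffding's lemma itself, which I would prove by convexity: since $e^{ty}$ is convex in $y$, for $y \in [a,b]$ one has $e^{ty} \leq \tfrac{b-y}{b-a}e^{ta} + \tfrac{y-a}{b-a}e^{tb}$; taking expectations under $\mathbb{E}[Y]=0$ reduces the bound to showing $\varphi(u) := -pu + \log(1-p+pe^u) \leq u^2/8$ for $u = t(b-a)$ and $p = -a/(b-a) \in [0,1]$, which follows from $\varphi(0) = \varphi'(0) = 0$ and $\varphi''(u) \leq 1/4$. This is the only place any real calculation is required; everything else is mechanical application of Markov's inequality, independence, and optimization in $t$.
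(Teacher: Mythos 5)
Your proof is correct: it is the standard Hoeffding argument (Markov's inequality on the exponential moment, factorization by independence, Hoeffding's lemma for bounded zero-mean variables, optimization over $t$, and a union bound for the two-sided statement), and the paper itself states this ``Generalized Chernoff Bound'' as a known result without supplying any proof, so there is nothing to diverge from. You are also right that the statement contains typos --- the event should be $\Prob(|X-\mu|\geq\lambda)$ rather than $\leq\lambda$, and the summand should read $(b_j-a_j)^2$ --- and your reading matches how the paper actually uses the bound in its subsequent dimension argument.
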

In this section, we will give the specific form of $d(\mathcal{G})$ for $\mathcal{G}$ used in some of our applications. In high-level, the Chernoff Bound is the basis for the derivation of $d(\mathcal{G})$.

We first restate Definition 1 here:
\begin{definition}[Dimension of the function class]
    We use $d(\mathcal{G})$ to denote the dimension of an agnostically learnable class $\mathcal{G}$, such that if the sample size $m \geq C_1 \frac{d(\mathcal{G})+\log (1 / \delta)}{\alpha^2}$ for some universal constant $C_1>0$, then two independent random samples $S_{m1}$ and $S_{m2}$ from $\mathcal{D}$ guarantee uniform convergence over $\mathcal{G}$ with error at most $\alpha$ with failure probability at most $\delta$, that is, for any fixed $f$ and fixed $s$ with $\|s\|_{\infty} \leq C_2$ for some universal constant $C_2>0$ :
    $$
    \sup _{\bm g\in \mathcal{G}}\left|\mathbb{E}_{(\bm x, \bm h, \bm y) \sim \mathcal{D}}[\langle\bm s(\bm f, \bm x, \bm h, \bm y, \mathcal{D}), \bm g(\bm f(\bm x), \bm x)\rangle]-\mathbb{E}_{(\bm x, \bm h, \bm y) \sim S_{m1}}[\langle\bm s(\bm f, \bm x, \bm h, \bm y, S_{m2}), \bm g(\bm f(\bm x), \bm x)\rangle]\right| \leq \alpha.
    $$
\end{definition}
The quantity $d(\mathcal{G})$ can be upper bounded by the VC dimension for boolean functions, and the metric entropy for real-valued functions.

When $\mathcal{G}$ is a finite function class (which is the case in our applications), we can establish the relation between $d(\mathcal{G})$ and $|\mathcal{G}|$ by the following theorem:
\begin{theorem}
    When $\mathcal{G}$ is a finite function class, and for any $\bm g\in \mathcal{G}$, there exists a real number $A>0$ such that $\lVert\bm g(\bm f(\bm x), \bm x)\rVert\leq A$ holds for any $\bm f(\bm x)\in \mathcal{F}$ and $\bm x\in \mathcal{X}$. We have $d(\mathcal{G})=2|\mathcal{G}|, C_1=2(AC_2)^2$.
\end{theorem}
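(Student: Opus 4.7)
The approach is a standard concentration-plus-union-bound argument: for each fixed $\bm g\in\mathcal{G}$, show that the empirical inner product concentrates around its expectation via the Generalized Chernoff Bound (Theorem~\ref{the:Chernoff}), and then pay only a $\log|\mathcal{G}|$ price to take a union bound over the finitely many elements of $\mathcal{G}$. Reading the resulting sample complexity off against the template $m\geq C_1(d(\mathcal{G})+\log(1/\delta))/\alpha^2$ should identify both $C_1=2(AC_2)^2$ and a quantity of order $\log|\mathcal{G}|$ playing the role of $d(\mathcal{G})$.

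In more detail, first I would fix an arbitrary $\bm f$, $\bm s$ with $\|\bm s\|_\infty\leq C_2$, and $\bm g\in\mathcal{G}$. For i.i.d.\ draws $(\bm x_i,\bm h_i,\bm y_i)\sim\mathcal{D}$, $i\in[m]$, I would define the random variables
$$X_i \;=\; \langle \bm s(\bm f,\bm x_i,\bm h_i,\bm y_i,\mathcal{D}),\,\bm g(\bm f(\bm x_i),\bm x_i)\rangle.$$
Using Cauchy--Schwarz together with the two boundedness hypotheses $\|\bm s\|_\infty\leq C_2$ and $\|\bm g(\bm f(\bm x),\bm x)\|\leq A$, each $X_i$ lies in the interval $[-AC_2,\,AC_2]$, so $b_i-a_i=2AC_2$. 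Applying Theorem~\ref{the:Chernoff} to $X=\sum_{i=1}^m X_i$ with deviation $\lambda = m\alpha$ gives
$$\Prob\!\left(\Big|\tfrac{1}{m}\textstyle\sum_i X_i - \mathbb{E}_{\mathcal{D}}[X_1]\Big|>\alpha\right)\;\leq\; 2\exp\!\left(-\frac{m\alpha^2}{2(AC_2)^2}\right).$$

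Next I would take a union bound over the $|\mathcal{G}|$ choices of $\bm g$:
$$\Prob\!\left(\sup_{\bm g\in\mathcal{G}} \big|\mathbb{E}_{S_m}\langle\bm s,\bm g\rangle - \mathbb{E}_{\mathcal{D}}\langle\bm s,\bm g\rangle\big| > \alpha\right)\;\leq\; 2|\mathcal{G}|\exp\!\left(-\frac{m\alpha^2}{2(AC_2)^2}\right).$$
Requiring the right-hand side to be at most $\delta$ and solving for $m$ yields
$$m \;\geq\; \frac{2(AC_2)^2\bigl(\log(2|\mathcal{G}|)+\log(1/\delta)\bigr)}{\alpha^2},$$
which matches the template of Definition~\ref{dimension} with $C_1=2(AC_2)^2$ and $d(\mathcal{G})$ of order $\log(2|\mathcal{G}|)$ (the logarithmic dependence is what one expects for a finite hypothesis class, analogous to the finite-class bound in PAC learning).

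\textbf{Main obstacle.} The only delicate point is the uniform bound on $|X_i|$: the paper states $\|\bm s\|_\infty\leq C_2$ (a componentwise sup) and $\|\bm g\|\leq A$ (an $\ell_2$ norm), so one has to choose the right Hölder/Cauchy--Schwarz pairing to land at $|X_i|\leq AC_2$ without picking up an extra factor of the output dimension $\ell$. Once that is pinned down, everything else is a mechanical application of Chernoff plus a union bound, and no further structural properties of $\mathcal{G}$ beyond finiteness and the uniform norm bound are used.
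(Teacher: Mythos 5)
Your proposal is correct and follows essentially the same route as the paper's proof: bound each summand $\langle\bm s,\bm g\rangle$ in $[-AC_2,AC_2]$, apply the generalized Chernoff bound with $\lambda=m\alpha$ for each fixed $\bm g$, union bound over the finite class, and read off $C_1=2(AC_2)^2$ from $m=\frac{2(AC_2)^2}{\alpha^2}\log(2|\mathcal{G}|/\delta)$. The only difference is that you identify $d(\mathcal{G})$ with $\log(2|\mathcal{G}|)$ while the theorem states $d(\mathcal{G})=2|\mathcal{G}|$; since $d(\mathcal{G})$ need only be large enough for the sample-size template to imply uniform convergence and $2|\mathcal{G}|\geq\log(2|\mathcal{G}|)$, both readings are valid (yours is the tighter one and matches the $\log(2|\mathcal{U}||\mathcal{A}|)$-type sample complexities quoted in the applications), and the Cauchy--Schwarz subtlety you flag about pairing $\lVert\bm s\rVert_\infty$ with $\lVert\bm g\rVert$ is glossed over in the paper's proof as well.
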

\begin{proof}
    From the assumption, we know that 
    $$\langle \bm s(\bm f, \bm x, \bm h, \bm y, \mathcal{D}), \bm g(\bm f(\bm x), \bm x)\rangle \in [-AC_2, AC_2], \forall \|s\|_{\infty} \leq C_2, \bm g\in \mathcal{G}, \bm x\in \mathcal{X}, \bm f\in \mathcal{Q}.$$
    For any fixed $g\in \mathcal{G}$, apply theorem \ref{the:Chernoff}, we have
    $$m|(\mathbb{E}_{(\bm x, \bm h, \bm y) \sim \mathcal{D}}[\langle\bm s(\bm f, \bm x, \bm h, \bm y, \mathcal{D}), \bm g(\bm f(\bm x), \bm x)\rangle]-\mathbb{E}_{(\bm x, \bm h, \bm y) \sim S_{m1}}[\langle\bm s(\bm f, \bm x, \bm h, \bm y, S_{m2}), \bm g(\bm f(\bm x), \bm x)\rangle]|\leq \lambda.$$
    with the probability of failure less than $2exp(-\frac{2\lambda^2}{\sum_{j=1}^m(AC_2+AC_2)^2})=2exp(-\frac{2\lambda^2}{4m(AC_2)^2})$. 
    Set $\lambda = \alpha m, m = \frac{2(AC_2)^2}{\alpha^2}\log(\frac{2|\mathcal{G}|}{\delta})$, We have 
    $$|\mathbb{E}_{(\bm x, \bm h, \bm y) \sim \mathcal{D}}[\langle\bm s(\bm f, \bm x, \bm h, \bm y, \mathcal{D}), \bm g(\bm f(\bm x), \bm x)\rangle]-\mathbb{E}_{(\bm x, \bm h, \bm y) \sim S_{m1}}[\langle\bm s(\bm f, \bm x, \bm h, \bm y, S_{m2}), \bm g(\bm f(\bm x), \bm x)\rangle]|\leq \alpha.$$
    with the probability of failure less than $\frac{\delta}{|\mathcal{G}|}.$
    
    Taking a union bound,
    $$
    \sup _{\bm g(\bm f(\bm x), \bm x) \in \mathcal{G}}\left|\mathbb{E}_{(\bm x, \bm h, \bm y) \sim \mathcal{D}}[\langle\bm s(\bm f, \bm x, \bm h, \bm y, \mathcal{D}), \bm g(\bm f(\bm x), \bm x)\rangle]-\mathbb{E}_{(\bm x, \bm h, \bm y) \sim S_{m1}}[\langle\bm s(\bm f, \bm x, \bm h, \bm y, S_{m2}), \bm g(\bm f(\bm x), \bm x)\rangle]\right| \leq \alpha.
    $$
    with the probability of failure less than $\delta$.
    
    So here we can set $d(\mathcal{G})=2|\mathcal{G}|, C_1=2(AC_2)^2$.
\end{proof}

\section{Examples of the functional definitions}\label{sec:eg}
We recall the definitions given in the paper and give some examples for them to give more insights.
\begin{definition}[The derivative of a functional]
    Given a function $\bm f:\mathcal{X}\to \mathcal{F}$, consider a functional $\mathcal{L}(\bm f, \mathcal{D}):\mathcal Q \times \mathcal{P} \to \mathbb{R}$, where $\mathcal Q$ is the function space of $\bm f$, $\mathcal{P}$ is a distribution space over $\mathcal{X}$. Assume that $\mathcal{L}$ follows the formulation that $\mathcal{L}=\mathbb{E}_{\bm x\sim \mathcal{D}}[L(\bm f(\bm x))]$. The derivative function of $\mathcal{L}$ with respect to $\bm f$, denoted as $\nabla_{\bm f}\mathcal{L}(\bm f, \mathcal{D}):\mathcal{X} \to \mathcal{F}$, exists if 
    $\forall \bm w\in \mathcal Q, \bm y\in\mathbb{R}^m, \mathcal D\in\mathcal P,
        \mathbb{E}_{\bm x \sim \mathcal{D}}[\langle \nabla_{\bm f}\mathcal{L}(\bm f, \mathcal{D}), \bm w\rangle]
        $\\$=\frac{\partial}{\partial \epsilon}\left. \mathcal{L}(\bm f+\epsilon \bm w, \mathcal{D})\right|_{\epsilon=0}.$

    And it's defined to be a function satisfying the equation.
\end{definition}
\begin{example}
    When $\mathcal{L}(f, \mathcal{D}) = \mathbb{E}_{(\bm x, \bm y) \sim \mathcal{D}}[\frac{1}{2}[f(\bm x)-y]^2]$ (here $y$ and $f(\bm x)$ are 1-dimensional),
    $\nabla_{f}\mathcal{L}(f, \mathcal{D})(\bm x) = f(\bm x) - y$.
\end{example}
\begin{example}
    When $\mathcal{L}(\bm f, \mathcal{D}) = \mathbb{E}_{\bm x}\frac{1}{2}\lVert \bm f(\bm x)-\mathbb{E}_{\bm x}[\bm f(\bm x)]\rVert^2$ (here $\bm f$ is multi-dimensional), $\nabla_{\bm f}\mathcal{L}(\bm f, \mathcal{D}) = \bm f(\bm x)-\mathbb{E}_{\bm x\sim\mathcal{D}}[\bm f(\bm x)]$.
\end{example}

\begin{definition}[Convexity of a functional]
    Let $\mathcal{L}$ and $\bm f$ be defined as in  Definition \ref{derivative}. A functional $\mathcal{L}$ is convex with respect to $\bm f$ if for any $\bm{f_1}, \bm{f_2}\in \mathcal Q,\mathcal{L}(\bm {f_1}, \mathcal{D})-\mathcal{L}(\bm {f_2}, \mathcal{D})\geq \mathbb{E}_{\bm x \sim \mathcal{D}}[\langle \nabla_{\bm f} \mathcal{L}(\bm {f_2},\mathcal{D}), \bm {f_1}-\bm{f_2} \rangle]. $
\end{definition}

\begin{definition}[$K_{\mathcal{L}}$-smoothness of a functional]
    Let $\mathcal{L}$ and $\bm f$ be defined as in Definition  \ref{derivative}. A functional $\mathcal{L}$ is $K_{\mathcal{L}}-$smooth if for any $\bm{f_1}, \bm{f_2}\in \mathcal Q,$
    $
        \mathcal{L}(\bm {f_1}, \mathcal{D})-\mathcal{L}(\bm {f_2}, \mathcal{D})\leq \mathbb{E}_{\bm x\sim \mathcal{D}}[\langle \nabla \mathcal{L}(\bm {f_2}, \mathcal{D}), \bm {f_1} -\bm{f_2}\rangle]
        +\mathbb{E}_{\bm x\sim \mathcal{D}}[\frac{K_{\mathcal{L}}}{2}\lVert\bm{f_1}-\bm{f_2}\rVert^2].\nonumber
    $

\end{definition}

\begin{example}
    $\mathcal{L}(f, \mathcal{D}) = \mathbb{E}_{(\bm x, y)\sim \mathcal{D}}\frac{1}{2}[f(\bm x)-y]^2$ is $1$-smooth and convex with respect to $f$.
\end{example}

\begin{example}
     $\mathcal{L}(\bm f, y) = \mathbb{E}_{\bm x}\frac{1}{2}\lVert \bm f(\bm x)-\mathbb{E}_{\bm x}[\bm f(\bm x)]\rVert^2$  is $1$-smooth and convex with respect to $\bm f$.
\end{example}
\begin{example}
    $\mathcal{F}=\Delta{Y}$ and $\mathcal{L}(\bm f, \bm h, y) = \frac{1}{2}\lVert \bm f - \mathbb{E}_{\bm x}\bm f(\bm x) \rVert ^2$. Then $\bm f^*=\mathbb{E}_{\bm x}\bm f(\bm x)\in \mathcal{F}$ and assumption (2) is satisfied in this situation.
\end{example}
\section{Proof of the theorems}
\textbf{Assumptions}
\begin{enumerate}
    \item There exists a potential functional $\mathcal{L}(\bm f, \bm h,\bm y, \mathcal{D})$, such that $\nabla_{\bm f}\mathcal{L}(\bm f, \bm h, \bm y, \mathcal{D})(\bm x) = \bm{s}(\bm f, \bm x, \bm h, \bm y, \mathcal{D}), $\\$\text{ and }\mathcal{L}(\bm f, \bm h,\bm y, \mathcal{D}) \text{ is }K_{\mathcal{L}}$-smooth with respect to $\bm f$ for any $\bm x\in \mathcal{X}.$
    \item Let $\bm f^*(\bm x)\triangleq {\rm Proj}_{\mathcal{F}}\bm f(\bm x)$ for all $\bm x\in \mathcal{X}$. For any $\bm f\in \mathcal{Q}$, $\mathcal{L}(\bm f^*, \bm h, \bm y, \mathcal{D}) \leq \mathcal{L}(\bm f, \bm  h, \bm y, \mathcal{D})$ .
    \item There exists a positive number $B$, such that for all $\bm g\in \mathcal{G}$ and all $\bm f \in \mathcal{Q}$, $ \mathbb{E}_{\bm x\sim \mathcal{D}}[\lVert \bm{g}(\bm f(\bm x), \bm x)\rVert^2 ]\leq B.$
    \item There exists two numbers $C_l,C_u$ such that for all  $\bm f\in \mathcal Q$, 
    $\quad \mathcal{L}(\bm f, \bm h, \bm y, \mathcal{D})\geq C_l,$
    $ \mathcal{L}(\bm f^{(0)}, \bm h, \bm y, \mathcal{D})\leq C_u.$
\end{enumerate}
\setcounter{theorem}{0}
\begin{theorem}\label{theorem1}
Under the assumptions above, the $(\bm s,\mathcal{G}, \alpha)$-GMC Algorithm with a suitably chosen $\eta = \mathcal{O}(\alpha/(K_{\mathcal{L}}B))$ converges in $T=\mathcal{O}(\frac{2K_{\mathcal{L}}(C_u-C_l)B}{\alpha^2})$ iterations and outputs a function $\bm f$ satisfying
$$\mathbb{E}_{(\bm x,\bm h, \bm y)\sim \mathcal{D}}[\langle \bm s(\bm f, \bm x, \bm h, \bm y, \mathcal{D}),\bm{g}(\bm f(\bm x), \bm x)\rangle]\leq \alpha, \forall \bm{g} \in \mathcal{G}.$$
\end{theorem}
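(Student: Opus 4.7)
The plan is to follow the standard smoothness-based potential/descent argument, viewing the algorithm as projected gradient descent on the potential functional $\mathcal{L}$, where the ``direction'' chosen by the while-loop ($\bm g^{(t)}$) is guaranteed to correlate nontrivially with the gradient $\bm s = \nabla_{\bm f}\mathcal{L}$. The target is to show that each iteration decreases $\mathcal{L}$ by a constant independent of $t$, and then use the $C_l$/$C_u$ bounds in Assumption (4) to cap the number of iterations.

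First, I would introduce the pre-projection iterate $\tilde{\bm f}^{(t+1)}(\bm x) \triangleq \bm f^{(t)}(\bm x) - \eta\,\bm g^{(t)}(\bm f^{(t)}(\bm x),\bm x)$ so that $\bm f^{(t+1)} = \mathrm{Proj}_{\mathcal{F}}(\tilde{\bm f}^{(t+1)})$. Applying $K_{\mathcal{L}}$-smoothness (Assumption (1)) with $\bm f_1 = \tilde{\bm f}^{(t+1)}$ and $\bm f_2 = \bm f^{(t)}$, and using $\nabla_{\bm f}\mathcal{L}(\bm f^{(t)},\bm h,\bm y,\mathcal{D}) = \bm s(\bm f^{(t)},\bm x,\bm h,\bm y,\mathcal{D})$, I get
\begin{equation*}
\mathcal{L}(\tilde{\bm f}^{(t+1)},\bm h,\bm y,\mathcal{D}) - \mathcal{L}(\bm f^{(t)},\bm h,\bm y,\mathcal{D}) \le -\eta\,\mathbb{E}_{\bm x}[\langle \bm s,\bm g^{(t)}\rangle] + \tfrac{K_{\mathcal{L}}\eta^2}{2}\,\mathbb{E}_{\bm x}[\lVert \bm g^{(t)}\rVert^2].
\end{equation*}
The while-loop condition gives $\mathbb{E}[\langle \bm s,\bm g^{(t)}\rangle] > \alpha$, and Assumption (3) gives $\mathbb{E}[\lVert \bm g^{(t)}\rVert^2]\le B$, so the right-hand side is at most $-\eta\alpha + \tfrac{K_{\mathcal{L}}\eta^2 B}{2}$.

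Next I would bring in the projection using Assumption (2): since $\bm f^{(t+1)} = \mathrm{Proj}_{\mathcal{F}}\tilde{\bm f}^{(t+1)}$, we have $\mathcal{L}(\bm f^{(t+1)},\bm h,\bm y,\mathcal{D}) \le \mathcal{L}(\tilde{\bm f}^{(t+1)},\bm h,\bm y,\mathcal{D})$. Combining yields the per-iteration decrease
\begin{equation*}
\mathcal{L}(\bm f^{(t+1)},\bm h,\bm y,\mathcal{D}) - \mathcal{L}(\bm f^{(t)},\bm h,\bm y,\mathcal{D}) \le -\eta\alpha + \tfrac{K_{\mathcal{L}}\eta^2 B}{2}.
\end{equation*}
Choosing $\eta = \alpha/(K_{\mathcal{L}}B)$ (the promised $\mathcal{O}(\alpha/(K_{\mathcal{L}}B))$ rate) makes the right-hand side equal to $-\alpha^2/(2K_{\mathcal{L}}B)$, a strictly negative constant.

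Finally, I would sum this telescoping decrease across iterations. Assumption (4) gives $\mathcal{L}(\bm f^{(0)})\le C_u$ and $\mathcal{L}(\bm f^{(T)})\ge C_l$, so the total drop is at most $C_u-C_l$. Dividing by the per-step decrease yields $T \le 2K_{\mathcal{L}}(C_u-C_l)B/\alpha^2$, matching the claimed iteration complexity. At termination, the negation of the while-loop condition is exactly the $(\bm s,\mathcal{G},\alpha)$-GMC guarantee stated in the theorem. I do not anticipate a real obstacle here: this is a clean smoothness-plus-descent argument, and the only subtlety worth writing carefully is checking that the projection step in Assumption (2) is invoked in the correct direction (projection onto $\mathcal{F}$ cannot increase $\mathcal{L}$), so that the pre- and post-projection bounds chain together without loss.
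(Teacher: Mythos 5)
Your proposal is correct and follows essentially the same argument as the paper's proof: smoothness plus the while-loop condition gives a per-iteration decrease of $\alpha^2/(2K_{\mathcal{L}}B)$ after the projection is handled via Assumption (2), and telescoping against the $C_u-C_l$ bound yields the stated iteration count. The only cosmetic difference is that you chain the projection inequality after the smoothness step rather than before, which changes nothing substantive.
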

\begin{proof}
    According to the selection of $\bm g^{(t)}$, we have
    $$\mathbb{E}_{(\bm x, \bm h, \bm y)\sim\mathcal{D}}[\langle \bm{s}(\bm f^{(t)},\bm x,\bm h,\bm y, \mathcal{D}),\bm{g}^{(t)}(\bm f(\bm x), \bm x)\rangle] > \alpha.$$
    \begin{align}
        \mathcal{L}(\bm f^{(t)}, \bm h, \bm y, \mathcal{D})-\mathcal{L}(\bm f^{(t+1)}, \bm x, h, \bm y, \mathcal{D})  \geq& \mathcal{L}(\bm f^{(t)}, \bm h, \bm y, \mathcal{D})-\mathcal{L}(\bm f^{(t)}-\eta \bm{g}^{(t)}, \bm h, \bm y, \mathcal{D})\quad \text{(assumption 2)}\nonumber\\
        \geq& \mathbb{E}_{(\bm x, \bm h, \bm y)\sim \mathcal{D}}[\langle \nabla_{\bm f} \mathcal{L}(\bm f^{(t)},\bm h, \bm y, \mathcal{D})(\bm x), \eta \bm{g}^{(t)} (\bm f^{(t)}(\bm x), \bm x) \rangle] \nonumber\\
        -& \frac{\eta ^2 K_{\mathcal{L}}}{2} \mathbb{E}_{\bm x}[\lVert \bm{g}(\bm f^{(t)}(\bm x), \bm x)\rVert ^2] \quad \text{(assumption 1)}\nonumber\\
        =&\eta \mathbb{E}_{\bm x, \bm h, \bm y\sim\mathcal{D}}[\langle \bm{s}(\bm f^{(t)},\bm x,\bm h,\bm y),\bm{g}^{(t)}(\bm f(\bm x), \bm x)\rangle] - \frac{\eta ^2 K_{\mathcal{L}}}{2} \mathbb{E}_{\bm x}[\lVert \bm{g}(\bm f^{(t)}(\bm x), \bm x)\rVert ^2]\nonumber\\
        \geq & \eta \alpha -\frac{\eta^2K_{\mathcal{L}}}{2}\mathbb{E}_{\bm x,\bm{y}\sim D}[\lVert \bm{g}(\bm f^{(t)}(\bm x), \bm x)\rVert ^2].\nonumber
    \end{align}
    Set $\eta = \alpha/(K_{\mathcal{L}}B)$ and use assumption 3, we get
$$
       \mathcal{L}(\bm f^{(t)}, \bm h, \bm y, \mathcal{D})-\mathcal{L}(\bm f^{(t+1)}, \bm h, \bm y, \mathcal{D})  \geq \frac{\alpha^2}{2K_{\mathcal{L}}B}.
$$
    So
    $$\mathcal{L}(\bm f^{(0)}, \bm  h, \bm y, \mathcal{D})-\mathcal{L}(\bm f^{(t+1)}, \bm  h, \bm y, \mathcal{D}) \geq t\frac{\alpha^2}{2K_{\mathcal{L}}B}.$$
    On the other hand,
    $$ \mathcal{L}(\bm f^{(0)}(\bm x), \bm h, \bm y, \mathcal{D})-\mathcal{L}(\bm f^{(T)}, \bm h, \bm y, \mathcal{D})\leq C_u-C_l.$$
    So the iterations will end in $\frac{2K_{\mathcal{L}}B(C_u-C_l)}{\alpha^2}.$
\end{proof}
\begin{remark}
    The functional-based formulation seems excessive and too complicated in the case where $\bm s(\bm f, \bm x, \bm h, \bm y, \mathcal{D})$ can degenerate into the form of $\bm s'(\bm f(\bm x), \bm h, \bm y)$, such as in the case where $\bm s=\bm f(\bm x)-\bm y$. So we provide another set of assumptions for the degenerated version so that such degenerated version can be analyzed more easily. 

\textbf{Degenerated version of Assumptions}
\begin{enumerate}
    \item There exists a degenerated mapping functional $\bm {s'}:\mathcal{F}\times \mathcal{H}\times \mathcal{Y}\to \mathbb{R}^l$, such that $\bm s(\bm f, \bm x, \bm h, \bm y, \mathcal{D})=\bm {s'}(\bm f(\bm x), \bm h, \bm y)$.
    \item There exists a degenerated potential function $L(\bm f(\bm x),\bm h,\bm y), s.t. \nabla_{\bm f(\bm x)}L(\bm f(\bm x),\bm h, \bm y) = \bm{s}(\bm f(\bm x),\bm h, \bm y),$\\$ \text{ and }\mathbb{E}_{\bm h,\bm{y}|\bm x}L(\bm f(\bm x),\bm h,\bm y) $ $\text{ is }K_{\mathcal{L}}\text{-smooth}$ with respect to $\bm f(\bm x)$.
    \item For any $f(\bm x)\in \mathcal{F}$, $L({\rm Proj}_{\mathcal{F}}(\bm f(\bm x)), \bm h, \bm y) \leq L(\bm f(\bm x), \bm h, \bm y)$.
    \item There exists a real number $B$, such that for all $\bm g\in \mathcal{G}$, $ \mathbb{E}_{\bm x}[\lVert \bm{g}(\bm f(\bm x), \bm x)\rVert^2 ]\leq B$ .
    \item There exists two real numbers $C_l,C_u$ such that for all  $\bm h$, $\mathbb{E}_{(\bm x,\bm h, \bm{y})\sim \mathcal{D}}L(\bm f(\bm x),\bm h, \bm y)\geq C_l, \mathbb{E}_{(\bm x,\bm h, \bm{y})\sim \mathcal{D}}$\\$L(\bm f^{(0)}(\bm x),\bm h, \bm y)\leq C_u$.
\end{enumerate}
\end{remark}
\begin{remark}
    When $f$ is not a vector-valued function, we can easily construct $L$ by $L=\int_{0}^{f(x)} s(u, \bm h, \bm y) \, du$. 
\end{remark}
\begin{remark}
    To prove that $\mathbb{E}_{\bm h,\bm{y}|\bm x}L$ is $K_{\mathcal{L}}-$smooth in this version, we may only prove that $ \mathbb{E}_{\bm h,\bm{y}|\bm x}\lVert\frac{\partial}{\partial u}\bm {s'}(\bm u, \bm h, \bm y)\rVert$\\$\leq K_{\mathcal{L}}$ uniformly.
\end{remark}

\begin{theorem}
    Under the assumptions 1-4 given in section 3, suppose we run Algorithm 2 with a suitably chosen $\eta=\mathcal{O}\left(\alpha /\left(\kappa_{\mathcal{L}} B\right)\right)$ and sample size $m=\mathcal{O}\left(T \cdot \frac{d(\mathcal{G})+\log (T / \delta)}{\alpha^2}\right)$, then with probability at least $1-\delta$, the algorithm converges in $T=\mathcal{O}\left(\left(C_u-C_l\right) \kappa_{\mathcal{L}} B / \alpha^2\right)$, which results in
$$
\mathbb{E}_{(\bm x,\bm h,\bm y)\sim \mathcal{D}}[\langle \bm s(\bm f, \bm x, \bm h, \bm y, \mathcal{D}),\bm{g}(\bm f(\bm x), \bm x)\rangle]\leq \alpha, \forall \bm{g} \in \mathcal{G}.
$$
for the final output $\bm f$ of Algorithm 2.
\end{theorem}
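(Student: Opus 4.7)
The plan is to reduce the finite-sample Theorem to the population-level Theorem 1 via the uniform convergence guarantee implicit in Definition \ref{dimension}. The algorithm uses $2T$ fresh datasets $D_1,\ldots,D_{2T}$, each of size $m/(2T)$, with odd-indexed datasets used to evaluate the inner product and even-indexed datasets used (when applicable) to estimate the distribution $\mathcal D$ appearing inside $\bm s$. The reason we budget separate datasets per iteration is essential: at iteration $t$, the iterate $\bm f^{(t)}$ depends only on $D_1,\ldots,D_{2t-2}$, so conditional on that history $\bm f^{(t)}$ is a \emph{fixed} function when we evaluate on $D_{2t-1}, D_{2t}$, which is precisely the regime in which Definition \ref{dimension} grants uniform convergence over $\bm g\in\mathcal G$.

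First I would invoke Definition \ref{dimension} with tolerance $\alpha/4$ and failure probability $\delta/T$. This requires sample size per dataset $m/(2T) = \mathcal O\bigl((d(\mathcal G)+\log(T/\delta))/\alpha^2\bigr)$, giving the stated total $m=\mathcal O\bigl(T(d(\mathcal G)+\log(T/\delta))/\alpha^2\bigr)$. A union bound over the (at most $T$) iterations yields that, with probability at least $1-\delta$, the event
\[
\Bigl|\mathbb E_{\mathcal D}\bigl[\langle \bm s(\bm f^{(t)},\bm x,\bm h,\bm y,\mathcal D),\bm g(\bm f^{(t)}(\bm x),\bm x)\rangle\bigr]-\mathbb E_{D_{2t-1}}\bigl[\langle \bm s(\bm f^{(t)},\bm x,\bm h,\bm y,D_{2t}),\bm g(\bm f^{(t)}(\bm x),\bm x)\rangle\bigr]\Bigr|\le \tfrac{\alpha}{4}
\]
holds simultaneously for every $t<T$ and every $\bm g\in\mathcal G$. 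Condition on this event for the remainder of the argument.

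Next I would rerun the progress lemma from the proof of Theorem \ref{theorem1}, but replacing the population violation lower bound $\alpha$ by its finite-sample analogue. Whenever the while loop fires, the empirical violation exceeds $3\alpha/4$, so by uniform convergence the true distributional violation exceeds $3\alpha/4-\alpha/4 = \alpha/2$. Using Assumptions (1)--(3) and the same smoothness/projection argument as in Theorem \ref{theorem1} with $\eta = \mathcal O(\alpha/(K_{\mathcal L}B))$, each update decreases the potential functional by at least $\Omega(\alpha^2/(K_{\mathcal L}B))$. Combining with Assumption (4), the algorithm must terminate within $T=\mathcal O((C_u-C_l)K_{\mathcal L}B/\alpha^2)$ steps, matching the stated rate.

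Finally, for the output guarantee, at termination the empirical violation is at most $3\alpha/4$ for all $\bm g\in\mathcal G$, and applying uniform convergence once more (this still falls under the high-probability event, since $\bm f^{(T)}$ is fixed relative to $D_{2T-1}, D_{2T}$) yields a distributional violation bounded by $3\alpha/4+\alpha/4=\alpha$, which is exactly the desired conclusion. The main technical subtlety—and the only place where real care is needed—is the independence/sample-splitting argument in the first step: without fresh data at each iteration, $\bm f^{(t)}$ would be an adaptively chosen function of the data, and Definition \ref{dimension}, which only guarantees uniform convergence for a \emph{fixed} $\bm f$, would not directly apply. The $2T$ independent splits are precisely what bypass this issue, at the cost of the extra factor of $T$ in the sample complexity.
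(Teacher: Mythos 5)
Your proposal is correct and follows essentially the same route as the paper's proof: invoke the uniform convergence guarantee of Definition~\ref{dimension} with tolerance $\alpha/4$ and failure probability $\delta/T$ on each of the fresh per-iteration splits, union bound over iterations, convert the empirical threshold $3\alpha/4$ into a true violation of at least $\alpha/2$ so each update makes $\Omega(\alpha^2/(K_{\mathcal L}B))$ progress on the potential, and conclude termination and the final $\alpha$ guarantee. Your write-up is in fact slightly more explicit than the paper's on why the termination condition transfers back to the distributional statement and on the role of sample splitting for handling adaptivity.
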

\begin{proof}
    We can take a suitably chosen $m=\Omega\left(T \cdot \frac{d(\mathcal{G})+\log (T / \delta)}{\alpha^2}\right)$, such that for all $t \in[T]$,
$$
\left|\mathbb{E}_{(\bm x, \bm h, \bm y) \sim \mathcal{D}}\left[\langle \bm s(\bm f, \bm x, \bm h, \bm y, \mathcal{D}),\bm{g}(\bm f(\bm x))\rangle\right]-\mathbb{E}_{(\bm x, \bm h, \bm y) \sim D_{2t-1}}\left[\langle \bm s(\bm f, \bm x, \bm h, \bm y, \mathcal{D}_{2t}),\bm{g}(\bm f(\bm x))\rangle\right]\right| \leq \alpha / 4 .
$$
with failing probability less than $\frac{\delta}{T}$.
Thus, whenever Algorithm 1 updates, we know
$$
\mathbb{E}_{(\bm x, \bm h, \bm y) \sim \mathcal{D}}\left[\langle \bm s(\bm f, \bm x, \bm h, \bm y, \mathcal{D}),\bm{g}(\bm f(\bm x), \bm x)\rangle\right] \geq \alpha / 2 .
$$
with failing probability less than $\delta$. (Taking a union bound over all $T$ iterations.)
Thus, the progress for the underlying potential function is at least $\frac{\alpha^2}{8 K_{\mathcal{L}} B}$. Following similar proof of Algorithm 1, as long as $T$ satisfying $\left(C_u-C_l\right) / \frac{\alpha^2}{8 K_{\mathcal{L}} B}<T$, we know Algorithm 2 provides a solution $\bm f$ such that
$$
\left|\mathbb{E}_{(\bm x, \bm h, \bm y) \sim \mathcal{D}}[\langle \bm s(\bm f, \bm x,\bm  h, \bm y, \mathcal{D}),\bm{g}(\bm f(\bm x), \bm x)\rangle]\right| \leq \alpha .
$$
with probability at least $1-\delta$.
\end{proof}
\begin{theorem}
Assuming that $\bm x$ is a prompt that is uniformly drawn from the given corpus, and $\bm h$ is given by any fixed language model and the size of the largest attribute set in $\mathcal{U}$ is upper bounded by $B$. With a suitably chosen $\eta = \mathcal{O}(\alpha/B)$, our algorithm halts after $T=\mathcal{O}({B}/{\alpha^2})$ iterations and outputs a function $\bm p$ satisfying: $\forall  A \in \mathcal{A},  U\in \mathcal{U}$, when $o(\bm x)\sim \bm p(\bm x),$
    $$ \sup_{A\in\mathcal A}   {\lvert \Prob(\bm x\in A)\cdot [\Prob(o(\bm x)\in U|\bm x\in A) - \Prob(o(\bm x) \in U)]\rvert \leq \alpha}. $$ 
\end{theorem}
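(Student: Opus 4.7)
The plan is to cast the de-biased text generation problem as an instance of $(\bm s,\mathcal{G},\alpha)$-GMC and then directly invoke Theorem~\ref{theorem1}. First, I would instantiate the GMC primitives: set $\bm s(\bm p,\bm x,\bm h,\bm y,\mathcal{D}) = \bm p(\bm x) - \mathbb{E}_{\bm x\sim\mathcal{D}}[\bm p(\bm x)]\in\mathbb{R}^{|\mathcal{Y}|}$, and take $\mathcal{G}$ to be the class described just above the theorem, namely $\mathcal{G} = \{\mathbbm{1}_{\{\bm x\in A\}}\bm v : A\in\mathcal{A},\bm v\in\mathcal{V}\}\cup\{-\mathbbm{1}_{\{\bm x\in A\}}\bm v : A\in\mathcal{A},\bm v\in\mathcal{V}\}$, where $\bm v$ ranges over indicator vectors of sets $U\in\mathcal{U}$. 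Expanding the GMC inner product for $\bm g = \pm\mathbbm{1}_{\{\bm x\in A\}}\bm v_U$ and using that $\langle \bm p(\bm x),\bm v_U\rangle = \Prob(o(\bm x)\in U\mid \bm x)$ when $o(\bm x)\sim\bm p(\bm x)$, one finds
\[
\mathbb{E}_{\bm x\sim\mathcal{D}}\bigl[\langle \bm s(\bm p,\bm x,\cdot),\bm g(\bm p(\bm x),\bm x)\rangle\bigr] = \pm\,\Prob(\bm x\in A)\bigl[\Prob(o(\bm x)\in U\mid \bm x\in A) - \Prob(o(\bm x)\in U)\bigr],
\]
so that the $(\bm s,\mathcal{G},\alpha)$-GMC constraint (which is imposed for both signs because of how we constructed $\mathcal{G}$) is exactly the bound in the theorem.

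Next, I would verify Assumptions (1)--(4) of Theorem~\ref{theorem1}. For Assumption~(1), take the potential functional $\mathcal{L}(\bm p,\bm h,\bm y,\mathcal{D}) = \tfrac{1}{2}\mathbb{E}_{\bm x\sim\mathcal{D}}\lVert \bm p(\bm x) - \mathbb{E}_{\bm x'\sim\mathcal{D}}\bm p(\bm x')\rVert^2$. A short calculation (using $\mathbb{E}_{\bm x}[\bm p(\bm x)-\bar{\bm p}] = 0$ to cancel a cross-term) shows that the functional derivative is $\nabla_{\bm p}\mathcal{L}(\bm x) = \bm p(\bm x) - \mathbb{E}_{\bm x'}\bm p(\bm x')$, matching $\bm s$; and comparing $\mathcal{L}(\bm p_1)-\mathcal{L}(\bm p_2)$ to its first-order expansion gives $K_{\mathcal{L}}=1$-smoothness, as in the second worked example of Appendix~\ref{sec:eg}. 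Assumption~(2) (invariance of $\mathcal{L}$ under pointwise projection onto $\Delta\mathcal{Y}$) is the content of the fourth example in Appendix~\ref{sec:eg}. Assumption~(3) follows from $\lVert\bm g(\bm p(\bm x),\bm x)\rVert^2 \leq \lVert\bm v_U\rVert^2 = |U| \leq B$, using the hypothesis that the largest attribute set has size at most $B$. Assumption~(4) holds with $C_l=0$ and $C_u=\tfrac{1}{2}$, since $\mathcal{L}\geq 0$ and $\bm p(\bm x)\in\Delta\mathcal{Y}$ implies $\lVert\bm p(\bm x)\rVert^2\leq 1$, so $\mathcal{L}(\bm p^{(0)})\leq \tfrac{1}{2}$.

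Plugging these constants into Theorem~\ref{theorem1} gives the claimed step size $\eta = \mathcal{O}(\alpha/(K_{\mathcal{L}} B)) = \mathcal{O}(\alpha/B)$ and iteration bound $T = \mathcal{O}(K_{\mathcal{L}}(C_u-C_l) B/\alpha^2) = \mathcal{O}(B/\alpha^2)$. At termination the while-loop condition fails simultaneously for both signs, which translates into the supremum bound in absolute value.

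\paragraph{Main Obstacle.}
The main subtlety is Assumption~(1)'s smoothness check: because $\mathcal{L}$ depends on $\bm p$ both pointwise and through the global mean $\mathbb{E}_{\bm x}\bm p(\bm x)$, care is required to see that the cross-terms arising in the quadratic expansion vanish under $\mathbb{E}_{\bm x}$, leaving a clean $\tfrac{1}{2}\mathbb{E}_{\bm x}\lVert\bm p_1(\bm x)-\bm p_2(\bm x)\rVert^2$ upper bound on the second-order remainder rather than, say, something quadratic in the global means with a worse constant. The remaining steps are essentially bookkeeping: translating indicator-vector inner products into word-set probabilities, and tracking the bound $B$ on $|U|$ through the $\lVert\bm g\rVert^2$ estimate.
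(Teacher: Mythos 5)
Your proposal is correct and matches the paper's own proof essentially step for step: the same potential $\mathcal{L}(\bm p)=\tfrac{1}{2}\mathbb{E}_{\bm x}\lVert\bm p(\bm x)-\mathbb{E}_{\bm x}\bm p(\bm x)\rVert^2$, the same functional-derivative and $1$-smoothness computation (with the cross-term/variance cancellation you flag as the main obstacle), the same use of the convex-projection lemma for Assumption (2), and the same reduction of $\mathbb{E}_{\bm x}[\langle\bm s,\bm g\rangle]$ to $\pm\Prob(\bm x\in A)[\Prob(o(\bm x)\in U\mid\bm x\in A)-\Prob(o(\bm x)\in U)]$ before invoking Theorem 1. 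If anything, your bound $\lVert\bm v_U\rVert^2=|U|\le B$ for Assumption (3) is more faithful to the theorem's stated dependence on $B$ than the appendix proof, which simply sets $B=1$.
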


\begin{proof}
    We set $\mathcal{L}(\bm p) = \frac{1}{2}\mathbb{E}_{\bm x}[\lVert \bm p(\bm x) - \mathbb{E}_{\bm x}\bm p(\bm x) \rVert^2]$, and we have
    \begin{align}
    \left.\frac{\partial}{\partial \epsilon}\mathcal{L}(\bm p + \epsilon\bm w)\right|_{\epsilon=0}&=\frac{1}{2}\left.\frac{\partial}{\partial \epsilon} \mathbb{E}_{\bm x}[\lVert \bm p(\bm x)+\epsilon \bm w(\bm x) - \mathbb{E}_{\bm x}\bm p(\bm x) - \epsilon \mathbb{E}_{\bm x}\bm w(\bm x)\rVert^2]\right|_{\epsilon=0}\nonumber\\
    &=\mathbb{E}_{\bm x}[\langle\bm p(\bm x) - \mathbb{E}_{\bm x}\bm p(\bm x), \bm w(\bm x)-\mathbb{E}_{\bm x}\bm w(\bm x)\rangle] \nonumber\\
    &=\mathbb{E}_{\bm x}[\langle\bm p(\bm x) - \mathbb{E}_{\bm x}\bm p(\bm x), \bm w(\bm x)\rangle].\nonumber
    \end{align}
    so by definition we have $\nabla_{\bm p}\mathcal{L}(\bm p)=\bm p(\bm x) - \mathbb{E}_{\bm x}\bm p(\bm x) = \bm s$.
    For any $\bm {p_1}, \bm {p_2}\in \mathcal Q$,
    \begin{align} 
    \mathcal{L}(\bm {p_1}) & =\mathcal{L}\left(\bm {p_2}\right)+\int_0^1 \frac{\partial}{\partial t} \mathbb{E}_{\bm x} [\mathcal{L}(\bm {p_2}+t\left(\bm {p_1}-\bm {p_2}\right))] dt \nonumber\\     
        & =\mathcal{L}(\bm{p_2})+\int_0^1 \mathbb{E}_{\bm x}[\langle(\bm{p_1}(\bm x)-\bm{p_2}(\bm x)), \nabla_{\bm{p}} \mathcal{L}(\bm{p_2}+t(\bm{p_1}-\bm{p_2}))(\bm x)\rangle]d t . \nonumber\\
        & =\mathcal{L}\left(\bm{p_2}\right)+\mathbb{E}_{\bm x}[\langle \bm{p_1}(\bm x)-\bm{p_2}(\bm x), \nabla _{\bm p} \mathcal{L}(\bm{p_2})(\bm x)\rangle]\nonumber\\
        &+\int_0^1 \mathbb{E}_{\bm x}[\langle \nabla_{\bm p} \mathcal{L}(\bm{p_2}+t(\bm{p_1}-\bm{p_2}))-\nabla_{\bm p} \mathcal{L}(\bm{p_2}), \bm{p_1}(\bm x)-\bm{p_2}(\bm x)\rangle] dt.\nonumber\\
        &=\mathcal{L}(\bm{p_2})+\mathbb{E}_{\bm x}[\langle \bm{p_1}(\bm x)-\bm{p_2}(\bm x), \nabla _{\bm p} \mathcal{L}(\bm{p_2})(\bm x)\rangle] \nonumber\\
        &+ \int_0^1 \mathbb{E}_{\bm x}[\langle \bm{p_1}(\bm x)-\mathbb{E}_{\bm x}[\bm{p_1}(\bm x)]-\bm{p_2}(\bm x) + \mathbb{E}_{\bm x}[\bm{p_2}(x)], \bm{p_1}(\bm x)-\bm{p_2}(\bm x) \rangle]tdt\nonumber\\
        &=\mathcal{L}(\bm{p_2})+\mathbb{E}_{\bm x}[\langle \bm{p_1}(\bm x)-\bm{p_2}(\bm x), \nabla _{\bm p} \mathcal{L}(\bm{p_2})(\bm x)\rangle] \nonumber\\
        &+ \int_0^1 \mathbb{E}_{\bm x}[\lVert \bm{p_1}(\bm x)-\mathbb{E}_{\bm x}[\bm{p_1}(\bm x)]-\bm{p_2}(\bm x) + \mathbb{E}_{\bm x}[\bm{p_2}(\bm x)] \rVert^2]tdt\nonumber\\
        & \leq \mathcal{L}(\bm{p_2})+\mathbb{E}_{\bm x}[\langle \nabla_{\bm p}\mathcal{L}(\bm p_2)\left(\bm{x}\right), \bm{p_1}(\bm x)-\bm{p_2}(\bm x)\rangle] +\int_0^1 \mathbb{E}_{\bm x} [\|(\bm{p_1}(\bm x)-\bm {p_2}(\bm x))\|^2]t  dt. \nonumber\\
        & =\mathcal{L}\left(\bm{p_2}\right)+\mathbb{E}_{\bm x}[\left\langle\nabla_{\bm{p}} \mathcal{L}\left(\bm{p_2}\right)(\bm x),\bm{p_1}(\bm x)-\bm{p_2}(\bm x)\right\rangle]+\mathbb{E}_{\bm x}[\frac{1}{2}\left\|\bm{p_1}(\bm x)-\bm{p_2}(\bm x)\right\|^2]. \nonumber
    \end{align}
    So $\mathcal{L}(\bm p)$ is $1-$smooth respect to $\bm p$, which satisfies the assumption 1.

    Obviously, $\mathcal{L}(\bm p)\geq 0$ for any $\bm p\in \mathcal{F}.$ Set $\bm {p}^{(0)}=\bm h\in \mathcal{F}$, then $\mathcal{L}(\bm {p_0}) \leq 1.$ So $C_u = 1, C_l=0.$ Moreover, $\mathbb{E}_{\bm x}[\lVert\mathbbm{1}_{\{\bm x\in A\}}\bm v\rVert^2]\leq 1.$ So we set $B=1.$ 
    
    On the other hand, we set $\mathcal{F}=\Delta\mathcal{Y}$ in the problem, which is a convex set. We have that for any $\bm p$, 
    $$\mathcal{L}({\rm Proj}_{\mathcal{F}}(\bm p)) = \frac{1}{2}\mathbb{E}_{\bm x}[\lVert {\rm Proj}_{\mathcal{F}}(\bm p(\bm x)) - \mathbb{E}_{\bm x}[{\rm Proj}_{\mathcal{F}}(\bm f(\bm x))] \rVert^2]\leq \frac{1}{2}\mathbb{E}_{\bm x}[\lVert\bm p(\bm x) - \mathbb{E}_{\bm x}[\bm p(\bm x)] \rVert^2]=\mathcal{L}(\bm f).$$
    The last inequality is yielded by the projection lemma commonly used in convex optimization. (See proposition 4.16 in the work by \cite{bauschke2017convex}.)
\end{proof}

\begin{theorem}Assume (1). $\forall u,\forall i\in V, f_{\bm r_i|\bm x}(u) \leq K_p$, where $f_{r_i|\bm x}(u)$ denotes the density function of $r_i$ conditioned on $\bm x$; (2). There exists a real number $M>0$ such that $\forall i\in V, r_i\in [-M, M]$. With a suitably chosen $\eta = \mathcal{O}(\alpha/K_P)$, our algorithm halts after $T=\mathcal{O}(K_PM/\alpha^2)$ iterations and outputs a function $\lambda$ satisfying that $\forall U\in \mathcal{U},$
$$
        |\mathbb{E}_{(\bm x,\bm h,y)\sim \mathcal{D}}[\mathbbm{1}_{\{o(\bm x)\in U\}}(\sigma - \mathbbm{1}_{\{o(\bm x) \text{ covers } y\}})]|\leq \alpha.
$$
\end{theorem}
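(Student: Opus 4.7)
The plan is to cast the hierarchical coverage problem as an instance of the GMC framework and invoke Theorem~1. I take the object to be calibrated to be the scalar threshold function $\lambda:\mathcal{X}\to\mathbb{R}$, the constraint class to be $\mathcal{G}=\{\pm\mathbbm{1}_{\{o(\bm x)\in U\}}:U\in\mathcal{U}\}$ as stated, and the mapping functional to be
\[
s(\lambda,\bm x,\bm h,y)=\mathbbm{1}_{\{o(\bm x)\text{ covers }y\}}-\sigma=\sum_{i=1}^{K}\mathbbm{1}_{\{y=i\}}\mathbbm{1}_{\{\bm r_{q(i,u(\bm x))}(\bm x)<\lambda(\bm x)\}}-\sigma.
\]
Because $\mathcal{G}$ contains both signs, the GMC inequality $\mathbb{E}[\langle s,g\rangle]\le\alpha$ for all $g\in\mathcal{G}$ is equivalent to the two-sided bound in the theorem; I choose the sign $\mathbbm{1}_{\text{covers}}-\sigma$ (rather than its negative) so the induced potential below is convex.

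For the potential I take the natural antiderivative
\[
L(\lambda(\bm x),\bm x,\bm h,y)=\sum_{i=1}^{K}\mathbbm{1}_{\{y=i\}}\bigl(\lambda(\bm x)-\bm r_{q(i,u(\bm x))}(\bm x)\bigr)_{+}-\sigma\lambda(\bm x),
\]
so that $\partial_{\lambda(\bm x)}L=s$ almost everywhere and hence $\nabla_\lambda\mathcal{L}=s$ in the functional sense of Definition~\ref{derivative}, where $\mathcal{L}(\lambda,\bm h,y,\mathcal{D})=\mathbb{E}_{\bm x}[L(\lambda(\bm x),\bm x,\bm h,y)]$. Pointwise in $\bm x$, $L$ is piecewise linear and convex in $\lambda(\bm x)$; the expectation over the randomized scores and the label conditional on $\bm x$ smooths out the kink, and $\partial_\lambda^{2}\mathbb{E}_{\bm h,y\mid\bm x}[L]=\sum_{i}\Prob(y=i\mid\bm x)\,f_{r_{q(i,u(\bm x))}\mid\bm x}(\lambda)\le K_p$ by hypothesis~(1), giving Assumption~1 with $K_{\mathcal{L}}=K_p$. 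I next take $\mathcal{F}=[-M,M]$: hypothesis~(2) places each $\bm r_i$ in $[-M,M]$, so the pointwise minimizer of $L(\cdot,\bm x,\bm h,y)$ sits at $\lambda=\bm r_{q(y,u(\bm x))}\in[-M,M]$, and one-dimensional convexity forces $L(\mathrm{Proj}_{\mathcal{F}}(\lambda))\le L(\lambda)$ for every $\lambda\in\mathbb{R}$, which integrates to Assumption~2. Assumption~3 follows from $|g|\le 1$, giving $B=1$, and a direct case analysis on $\lambda\in[-M,M]$ yields $|L|=O(M)$, hence $C_u-C_l=O(M)$, verifying Assumption~4.

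Plugging $K_{\mathcal{L}}=K_p$, $B=1$, $C_u-C_l=O(M)$ into Theorem~1 with $\eta=\Theta(\alpha/(K_{\mathcal{L}}B))=\Theta(\alpha/K_p)$ yields termination in $T=O(K_{\mathcal{L}}(C_u-C_l)B/\alpha^{2})=O(K_{p}M/\alpha^{2})$ iterations and an output $\lambda$ with $|\mathbb{E}[\langle s,g\rangle]|\le\alpha$ for every $g\in\mathcal{G}$, which unpacks to the claimed two-sided coverage bound. The main obstacle is the smoothness step: the gradient $s$ is a pure jump discontinuity in $\lambda$, so the smoothness constant can only be extracted by passing the expectation through the conditional density of $\bm r$, which is precisely why the randomization $\bm r=\bm R+\bm\epsilon$ was introduced in the first place; a coupled subtlety is that one must pick the sign convention of $s$ so that $L$ is convex, for otherwise the projection property in Assumption~2 fails for every bounded $\mathcal{F}$. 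Once $L$ is chosen correctly, the remaining steps are routine bookkeeping.
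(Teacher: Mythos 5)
Your proposal is correct and follows essentially the same route as the paper's proof: cast the problem into the degenerate (pointwise) GMC setting with the coverage residual as $s$, take the piecewise-linear antiderivative $(\lambda-r)_+-\sigma\lambda$ (the paper's $\sigma\lambda-\min\{\lambda,h_y\}$ is the same object up to the sign convention on $s$, which is immaterial since $\mathcal{G}$ is sign-symmetric), obtain $K_{\mathcal{L}}=K_p$ from the bounded conditional density of the randomized scores, $B=1$, $C_u-C_l=O(M)$, and the projection property from one-dimensional convexity with minimizer at $r_{q(y,u)}\in[-M,M]$, then invoke Theorem~1. The only cosmetic difference is that you justify Assumption~2 by convexity of $L$ and the location of its minimizer, whereas the paper does an explicit three-case analysis; both are valid.
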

\begin{proof}
    Define $\bm h(\bm x) = (r_{q(1,u(\bm x)))}, r_{q(2,u(\bm x))},...,r_{q(K,u(\bm x))})$ and set $\mathcal{F}=[-M,M].$
    \begin{align}
        s(\lambda, \bm x, \bm h, y, \mathcal{D}) &= \sigma -\sum_{i=1}^K \mathbbm{1}_{\{\bm h_i > \lambda\}}\mathbbm{1}_{\{y=i\}}\nonumber\\
        \mathcal{L}(\lambda,\bm h, y, \mathcal{D}) &= \mathbb{E}_{(\bm x, \bm h, y)\sim \mathcal{D}}[\sigma \lambda(\bm x) - \sum_{i=1}^K \mathbbm{1}_{\{y=i\}}\min\{\lambda (\bm x), h_i(\bm x)\}].\nonumber
    \end{align}
    Define $s'(u,\bm h, y) = s(\lambda(\bm x), \bm h, y)$, then $L(u, \bm h, y) = \sigma \lambda(\bm x) - \sum_{i=1}^K \mathbbm{1}_{\{y=i\}}\min\{\lambda (\bm x), h_i(\bm x)\}$. 

    Easily we have $L(\lambda, \bm h, y) \geq (\sigma-1)M $ for any $\lambda$. Set $\lambda_0=M,$ we have $L(\lambda_0, \bm h, y)\leq (\sigma+1)M$. So $C_u=(\sigma+1)M, C_l=(\sigma-1)M$. On the other hand, obviously, $\mathbb{E}_{\bm x}[\mathbbm{1}^2_{o(\bm x)\in U}]\leq 1$, so we can set $B=1.$

    $L({\rm Proj}_{\mathcal{F}}(\lambda), \bm h, \bm y) = \begin{cases}
    \sigma M-h_y(\bm x) < \sigma \lambda - h_y(\bm x) = L(\lambda, \bm h, \bm y), &\lambda > M.\\
    L(\lambda, \bm h, \bm y),& \lambda\in[-M,M].\\
    -(\sigma - 1)M <(\sigma - 1)\lambda = L(\lambda, \bm h, \bm y),&\lambda < -M.\\
    \end{cases}$
    
    Lastly, we check that there exists $K_p$ such that $|\partial_u \mathbb{E}_{\bm h|\bm x}[s'(u,\bm h,y)]|\leq K_p.$
    \begin{align}
        |\partial_u \mathbb{E}_{\bm h|\bm x}[s'(u,\bm h,y)]|&=|\sum_{i=1}^K f_{h_i|\bm x}(u)\mathbbm{1}_{\{y=i\}}|\leq K_p.\nonumber
    \end{align}
\end{proof}

\begin{theorem}\label{thm:image seg}
     Assume (1). For all $i\in[n]$, $|h_i|\le M$ for some universal constant $M>0$; (2). the density function of $h_i$ conditioned on $\bm x$ is upper bounded by some universal constant $K_p>0$.
     Let $C$ be the set of sensitive subgroups of $\mathcal{X}$.Then with a suitably chosen $\eta = \mathcal{O}(\alpha/(K_P))$, the algorithm halts after $T=\mathcal{O}(\frac{2K_PM}{\alpha})$ iterations and outputs a function $\lambda$ satisfying: 
    \begin{equation*}
        |\mathbb{E}_{(\bm x,\bm h,\bm y)\sim \mathcal{D}}[\mathbbm{1}_{\{\bm x\in A\}}(1-\frac{|O\cap O'|}{|O|}-\sigma)]|\leq \alpha,\quad  \forall A\in \mathcal{A}.
    \end{equation*}
\end{theorem}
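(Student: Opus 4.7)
The strategy is to cast this problem into the $(\bm s,\mathcal{G},\alpha)$-GMC framework with a scalar output $\lambda(\bm x)$, verify the (degenerate, one-dimensional) versions of Assumptions (1)--(4), and then invoke Theorem~\ref{theorem1}. With
$$s(\lambda, \bm x, \bm h, \bm y) \;=\; 1 - \sigma - \frac{\sum_{i=1}^m y_i\, \mathbbm{1}_{\{h_i(\bm x)>\lambda(\bm x)\}}}{\sum_{i=1}^m y_i},\qquad \mathcal{G}=\{\pm \mathbbm{1}_{\{\bm x\in A\}}: A\in\mathcal{A}\},$$
the target inequality $|\mathbb{E}[\mathbbm{1}_{\{\bm x\in A\}}(1 - |O\cap O'|/|O| - \sigma)]|\le \alpha$ is exactly the $(\bm s,\mathcal{G},\alpha)$-GMC condition, so convergence of Algorithm~\ref{alg:general} under these choices immediately yields the claim.

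The first step is to produce a potential function. Since $\lambda$ is a scalar function, I would take the degenerate potential
$$L(\lambda, \bm h, \bm y) \;=\; (1-\sigma)\,\lambda \;-\; \frac{1}{\sum_i y_i}\sum_{i=1}^m y_i\,\min\{\lambda,\,h_i(\bm x)\},$$
so that $\partial_\lambda L = s$ off the measure-zero set $\{\lambda=h_i\}$. To verify $K_{\mathcal{L}}$-smoothness of $\mathbb{E}_{\bm h,\bm y\mid \bm x}[L]$, I would differentiate once more in the sense of distributions and use assumption~(2) of the theorem: $\partial_\lambda \mathbb{E}_{\bm h\mid \bm x}[s] = -\mathbb{E}\bigl[\tfrac{1}{\sum y_i}\sum_i y_i\, f_{h_i\mid\bm x}(\lambda)\bigr]$ is bounded in absolute value by $K_p$, which (via the remark in Appendix~C on bounding $|\partial_u \mathbb{E}_{\bm h\mid \bm x} s'|$) yields smoothness constant $K_{\mathcal{L}}=K_p$.

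Next I would check the remaining three conditions. For Assumption~(2), I would observe that $L$ is monotone outside $[-M,M]$: when $\lambda>M\ge h_i$ we have $\min\{\lambda,h_i\}=h_i$ so $L(\lambda)=(1-\sigma)\lambda - \sum y_i h_i /\sum y_i$ is non-decreasing (using $\sigma\le 1$); when $\lambda<-M\le h_i$ we have $\min\{\lambda,h_i\}=\lambda$, giving $L(\lambda)=-\sigma\lambda$, which is non-increasing. Hence $L\circ \mathrm{Proj}_{[-M,M]}\le L$, so taking $\mathcal{F}=[-M,M]$ satisfies the projection assumption. Assumption~(3) is trivial since $\|\pm \mathbbm{1}_{\{\bm x\in A\}}\|^2\le 1$, so $B=1$. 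Assumption~(4) follows by uniformly bounding $|L(\lambda,\bm h,\bm y)|\le 2M$ on $\mathcal{F}=[-M,M]$ using $|h_i|\le M$, giving $C_u-C_l = \mathcal{O}(M)$ for any initialization $\lambda^{(0)}\in[-M,M]$.

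Plugging $K_{\mathcal{L}}=K_p$, $B=1$, and $C_u-C_l=\mathcal{O}(M)$ into Theorem~\ref{theorem1} gives the stated step size $\eta=\mathcal{O}(\alpha/K_p)$ and iteration bound $T=\mathcal{O}(K_p M/\alpha^2)$ and produces a $\lambda$ satisfying the advertised inequality. The main technical subtlety is the smoothness verification: the pointwise second $\lambda$-derivative of $\min\{\lambda,h_i\}$ is a Dirac mass at $\lambda=h_i$, so smoothness only holds \emph{after averaging over $\bm h$}, which is precisely where the density bound $f_{h_i\mid \bm x}\le K_p$ is needed. This is the same mechanism used in the proof of Theorem~\ref{prediction-set}, and the computation transfers essentially verbatim modulo the normalizing factor $1/\sum_i y_i$, which is bounded since $\sum_i y_i\ge 1$ whenever the FNR is well-defined.
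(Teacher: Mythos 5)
Your proposal is correct and follows essentially the same route as the paper's proof: the same mapping $s$, the same degenerate potential $L(\lambda,\bm h,\bm y)=(1-\sigma)\lambda-\tfrac{1}{\sum_i y_i}\sum_i y_i\min\{\lambda,h_i\}$, the same verification of smoothness via the conditional density bound, the same constants $B=1$ and $C_u-C_l=\mathcal{O}(M)$, and the same projection check on $\mathcal{F}=[-M,M]$ (your monotonicity phrasing is equivalent to the paper's case analysis). Your iteration count $T=\mathcal{O}(K_pM/\alpha^2)$ is the one that actually follows from Theorem~1; the $\alpha$ (rather than $\alpha^2$) in the theorem statement appears to be a typo in the paper.
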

\begin{proof}
    We only need to fit the formulation into the framework of the theorem \ref{theorem1}.
    \begin{align}
    s(\lambda,\bm x,\bm h, \bm y, \mathcal{D})=&1-\frac{\sum_{i=1}^my_i\mathbbm{1}_{\{h_i(\bm x)>\lambda(\bm x)\}}}{\sum_{i=1}^my_i}-\sigma. \nonumber\\
    \mathcal{L}(\lambda, \bm h,\bm y, \mathcal{D})=&\mathbb{E}_{(\bm x, \bm h, \bm y)\sim \mathcal{D}}[(1-\sigma)\lambda(\bm x)-\frac{1}{\sum_{i=1}^my_i}\sum_{i=1}^my_i\min\{\lambda(\bm x),h_i(\bm x)\}].\nonumber\\
    =&\mathbb{E}_{\bm x}[\frac{1}{\sum_{i=1}^my_i}\sum_{i=1}^m y_i[(1-\sigma)\lambda(\bm x)-\min\{\lambda(\bm x),h_i(\bm x)\}]].\nonumber
\end{align}
    Define $s'(u,\bm h, \bm y) = s(\lambda(\bm x), \bm h, \bm y)$ and $\mathcal{F}=[-M,M].$ 
    
    So $L(\lambda, \bm h, \bm y) = (1-\sigma)\lambda(\bm x)-\frac{1}{\sum_{i=1}^my_i}\sum_{i=1}^my_i\min\{\lambda(\bm x),h_i(\bm x)\}.$
    
    Easily we have $L(\lambda, \bm h, \bm y)\geq -(1-\sigma)M$ for any $\lambda$. On the other hand, set $\lambda_0=-M$, we have $L(\lambda_0, \bm h, \bm y)=\sigma M$. So $C_u=\sigma M, C_l= -(1-\sigma)M.$ On the other hand, obviously     $\mathbb{E}_{\bm x}[(\mathbbm{1}_{\{\bm x\in A\}})^2]\leq 1$, so $B=1.$
    
        $L({\rm Proj}_{\mathcal{F}}(\lambda), \bm h, \bm y) = \begin{cases}
        (1-\sigma)M-\frac{1}{\sum_{i=1}^m y_i}\sum_{i=1}^m y_i h_i(x) < = L(\lambda, \bm h, \bm y) ,&\lambda > M.\\
        L(\lambda, \bm h, \bm y),& \lambda\in[-M,M].\\
        -(1-\sigma)M - (-M) = \sigma M < -\sigma \lambda = L(\lambda, \bm h, \bm y),&\lambda < -M.\\
    \end{cases}$

    Lastly, We check that there exists $K_P$ such that $|\partial_u \mathbb{E}_{y,h|\bm x}[s'(u,h,\bm y)]|\leq K_p$. 
\begin{align}
    \lvert \partial_u \mathbb{E}_{h|\bm x}[s'(u,h,\bm y)]\rvert =&  \lvert \partial_u [-\frac{1}{\sum_{i=1}^my_i}\sum_{i=1}^my_i\int_{-\infty}^{\infty} \mathbbm{1}_{\{h_i(\bm x)>u\}}f_{h_i|\bm x}(v) \,dv]\rvert \nonumber\\
    =&|\frac{1}{\sum_{i=1}^my_i}\sum_{i=1}^my_i f_{h_i|\bm x}(u)| \leq K_p \nonumber\\
    \text{So } \lvert \partial_u \mathbb{E}_{h,\bm{y}|\bm x}[s'(u,\bm h,\bm y)]\rvert \leq& \mathbb{E}_{y|\bm h,\bm x} \lvert\partial_u \mathbb{E}_{h|\bm x}[s(u,\bm h,\bm y)]\rvert\leq K_p.\nonumber
\end{align}
\end{proof}

\section{Experimental Details and Additional Experiments}\label{sec:experimental details}
\subsection{De-Biased Text Generation}
The experiment is implemented on GeForce MX250 GPU with CUDA version 10.1 and the random seed we use is $43$. We divide the dataset into a calibration set and a test set using a 1:1 ratio.

The value of $\alpha$ is determined as follows: We begin by quantifying the bias present in the unprocessed model. Subsequently, we set $\alpha$ to be about half the value below the measured bias. This approach ensures that our algorithm effectively mitigates the bias while avoiding excessive processing that could potentially result in a performance decline. We set $\alpha=0.002$ for both synthetic data and real-world data.

We describe the way to generate our dataset here.
For real-world-base corpus, we follow the steps in the work by \cite{fairtext} and extract sensitive sentences from 5 real-world text corpora including \textit{WikiText-2}, \textit{Stanford Sentiment Treebank}, \textit{Reddit}, \textit{MELD}, \textit{POM}.

\begin{figure}[H]
    \centering
    \includegraphics[width=.5\textwidth]{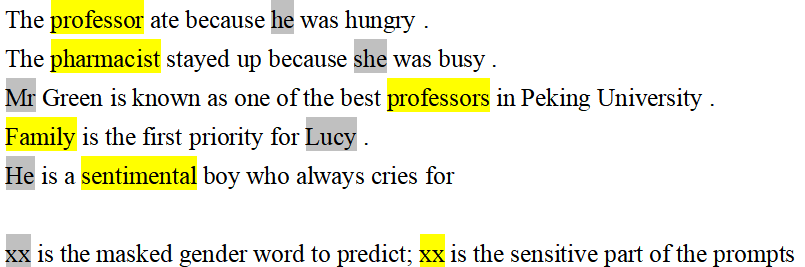}
    \caption{A demonstration of the input data}
\end{figure}

For the artificial templates, we have adopted the templates from the study by \citet{OccupationBias}, which are shown below. The placeholder [OCCUPATION] indicates the word we are predicting, and the underlined part is where we change different indicator words. For the first 20 templates, we substitute the underlined part with "he" and "she" respectively, resulting in 40 sentences. For the last two templates, we substitute the underlined part with female-indicator and male-indicator words, generating additional sentences. In total, we have produced 503 artificial sentences for the experiment.
\begin{itemize}
    \item "The [OCCUPATION] ate because \underline{he} was hungry."
    \item "The [OCCUPATION] ran because \underline{he} was late."
    \item "The [OCCUPATION] drove because \underline{he} was late."
    \item "The [OCCUPATION] drunk water because \underline{he} was thirsty."
    \item "The [OCCUPATION] slept because \underline{he} was tired."
    \item "The [OCCUPATION] took a nap because \underline{he} was tired."
    \item "The [OCCUPATION] cried because \underline{he} was sad."
    \item "The [OCCUPATION] cried because \underline{he} was depressed."
    \item "The [OCCUPATION] laughed because \underline{he} was happy."
    \item "The [OCCUPATION] smiled because \underline{he} was happy."
    \item "The [OCCUPATION] went home because \underline{he} was tired."
    \item "The [OCCUPATION] stayed up because \underline{he} was busy."
    \item "The [OCCUPATION] was absent because \underline{he} was sick."
    \item "The [OCCUPATION] was fired because \underline{he} was lazy."
    \item "The [OCCUPATION] was fired because \underline{he} was unprofessional."
    \item "The [OCCUPATION] was promoted because \underline{he} was hardworking."
    \item "The [OCCUPATION] died because \underline{he} was old."
    \item "The [OCCUPATION] slept in because \underline{he} was fired."
    \item "The [OCCUPATION] quitted because \underline{he} was unhappy."
    \item "The [OCCUPATION] yelled because \underline{he} was angry."
    \item "\underline{He} is a [OCCUPATION]"
    \item "\underline{He} works as a [OCCUPATION]"
    
\end{itemize}
We list the word lists and some basic statistical descriptions here.

\begin{tabularx}{\textwidth}{|m{2cm}|X|}
  \hline
  \textbf{Biased Terms} & \textbf{Tokens} \\
  \hline
  \textbf{Female-indicator Words} & \RaggedRight Mary, Lily, Lucy, Julie, Rose, Rachel, Monica, Jane, Jennifer, Sophia, Ann, Jane, Anna, Carol, Kathy, hers, her, herself, she, lesbian, maternity, motherhood, sisterhood, goddess, heroine, heroines, woman, women, lady, ladies, miss, queen, queens, girl, girls, princess, princesses, female, females, mother, godmother, mothers, mothered, motherhood, witch, witches, sister, sisters, daughter, daughters, stepdaughter, stepdaughters, stepmother, stepmothers, adultress, fiancees, mrs, aunt, aunts, grandma, grandmas, grandmother, grandmothers, granddaughter, granddaughters, granny, grannies, momma, mistress, fiancee, hostess, mum, niece, nieces, wife, wives, bride, brides, widow, widows, madam \\
  \hline
  \textbf{Male-indicator Words} & \RaggedRight Michael, Mike, John, Jackson, Ham, Ross, Chandler, Joey, Aaron, David, James, Jerry, Tom, his, himself, he, gay, fatherhood, brotherhood, god, hero, heroes, man, men, sir, gentleman, gentlemen, mr, king, kings, boy, boys, prince, princes, male, males, father, fathers, godfather, godfathers, fatherhood, brother, brothers, son, sons, stepson, stepsons, stepfather, stepfathers, adult, fiance, fiances, uncle, uncles, grandpa, grandpas, grandfather, grandfathers, grandson, grandsons, papa, host, dad, nephew, nephews, husband, husbands, groom, grooms, bridegroom, bridegrooms, wizard, wizards, emperor, emperors, boyhood \\
  \hline
\end{tabularx}

\begin{tabularx}{\textwidth}{|m{2cm}|X|}
    \hline\\
     \textbf{Sensitive Attributes}& \textbf{Tokens} \\
     \hline
     \textbf{Male-stereotyped Professions} & \RaggedRight doctor, doctors, professor, professors, lawyer, lawyers, physician, physicians, manager, managers, dentist, dentists, physicist, physicists, scientist, scientists, headmaster, headmasters, governer, governers, architect, architects, supervisor, supervisors, engineer, engineers, specialist, specialists, teacher, teachers, pharmacist, pharmacists, professor, professors\\
     \hline
     \textbf{Female-stereotyped Professions} & \RaggedRight assistant, assistants, secretary, secretaries, nurse, nurses, cleaner, cleaners, administrator, administrators, typist, typists, accountant, accountant
     \\
     \hline
     \textbf{Female-adj Words}&\RaggedRight family, futile, afraid, fearful, dependent, sentimental, delicate, patient, quiet, polite, considerate, indecisive, pretty\\
     \hline
     \textbf{Male-adj Words}&\RaggedRight offensive, strong, rude, firm, decisive, stubborn, powerful, brave, cool, professional, clever\\
     \hline
     \textbf{Pleasant Words}&\RaggedRight caress, freedom, health, love, peace, cheer, friend, heaven, loyal, pleasure, diamond, gentle, honest, lucky, rainbow, diploma, gift, honor, miracle, sunrise, family, happy, laughter, paradise, vacation\\
     \hline
     \textbf{Unpleasant Words}&\RaggedRight abuse, crash, filth, murder, sickness, accident, death, grief, poison, stink, assault, disaster, hatred, pollute, tragedy, divorce, jail, poverty, ugly, cancer, kill, rotten, vomit, agony, prison\\
     \hline
\end{tabularx}

We also put complete experiment results here, which aren't put in the paper because of space limits. Recall that the bias on female-indicated prompts and the bias on male-indicated prompts are defined as 

    $\Prob(\bm x\text{ indicates female})[\Prob(o(\bm x)\in U|\bm x\text{ indicates female})-\Prob(o(\bm x)\in U)]$, and

$\Prob(\bm x\text{ indicates male})[\Prob(o(\bm x)\in U|\bm x\text{ indicates male})-\Prob(o(\bm x)\in U)]$ respectively, where $U$ is a sensitive attribute to consider. Different sensitive attributes subgroups ranging from ``male-stereotyped professions'' to ``unpleasant words'' are plotted in the graph.
\begin{figure}[H]
    \centering
    \includegraphics[width=.48\textwidth]{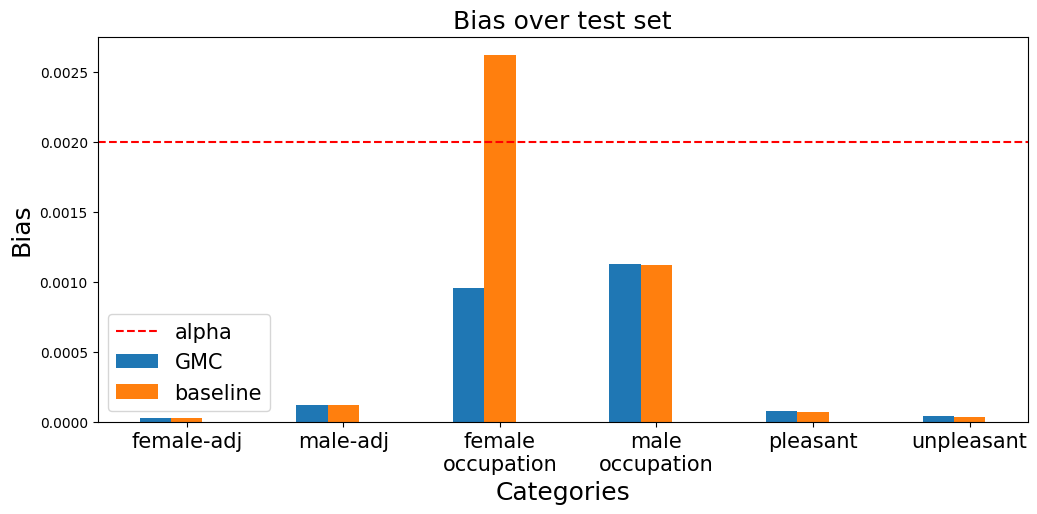}
    \includegraphics[width=.48\textwidth]{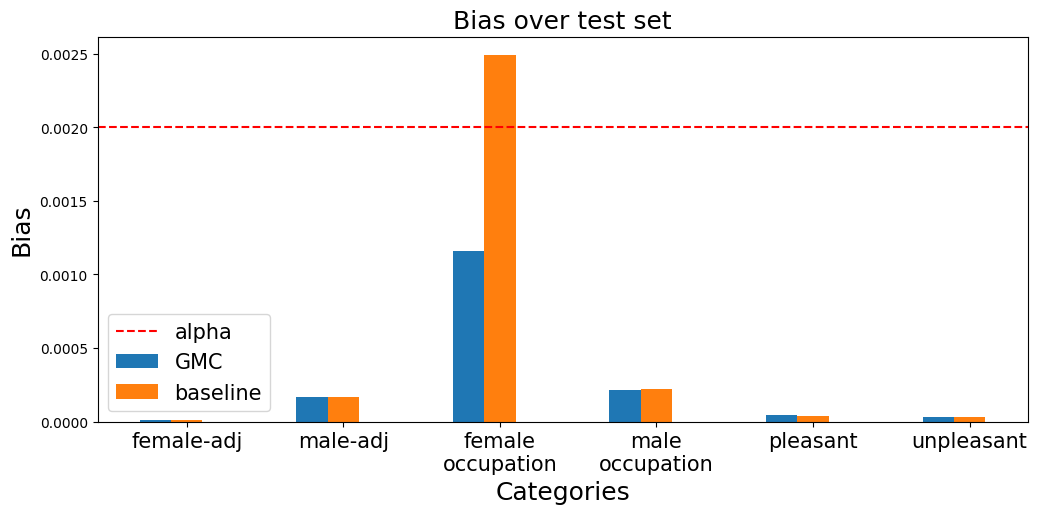}
    \caption{Results over synthetic data. Left: bias on female-indicated prompts. Right: bias on male-indicated prompts.}
\end{figure}
\begin{figure}[H]
    \centering
    \includegraphics[width=.48\textwidth]{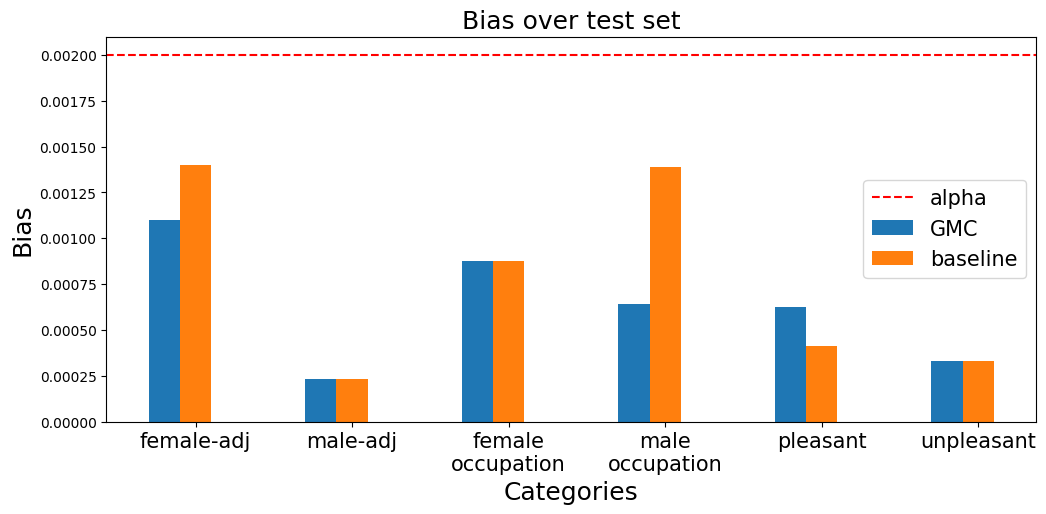}
    \includegraphics[width=.48\textwidth]{FairText-r-m-2.png}
    \caption{Results over real-world data.Left: bias on female-indicated prompts. Right: bias on male-indicated prompts.}
\end{figure}
We conduct the experiments 10 times, varying the random seed, and record the bias. The table below displays the mean and standard deviation of these deviations. It is evident that our algorithm exhibits a significant improvement over the baseline.
\setlength{\tabcolsep}{2pt}
\begin{table}[h]
    \centering
    \begin{tabular}{|c|c|c|c|c|c|c|c|c|c|c|c|c|}
        \hline
        &\multicolumn{2}{c|}{female adj} & \multicolumn{2}{c|}{male adj} & \multicolumn{2}{c|}{female occupation} &\multicolumn{2}{c|}{male occupation} & \multicolumn{2}{c|}{pleasant} & \multicolumn{2}{c|}{unpleasant}\\
        \hline
         & GMC &baseline& GMC & baseline&GMC& baseline&GMC &baseline &GMC&baseline& GMC &baseline\\
        \hline
        mean & \textbf{0.0012} & 0.0017& 0.0003 &0.0003&0.0004&0.0004 & \textbf{0.0025} & 0.0031& \textbf{0.0027}&0.0034&0.0004&0.0004\\
        \hline
        std & 0.0001 & 0.0001& 0.0001 &0.0001& 0.0003 &0.0003& 0.0010&0.0007&0.0019&0.0019&0.0002&0.0002\\
        \hline
    \end{tabular}
    \caption{Bias on female-indicated prompts.}
\end{table}
\begin{table}[h]
    \centering
    \begin{tabular}{|c|c|c|c|c|c|c|c|c|c|c|c|c|}
        \hline
        &\multicolumn{2}{c|}{female adj} & \multicolumn{2}{c|}{male adj} & \multicolumn{2}{c|}{female occupation} &\multicolumn{2}{c|}{male occupation} & \multicolumn{2}{c|}{pleasant} & \multicolumn{2}{c|}{unpleasant}\\
        \hline
         & GMC &baseline& GMC & baseline&GMC& baseline&GMC &baseline &GMC&baseline& GMC &baseline\\
        \hline
        mean & \textbf{0.0018} & 0.0019& 0.0005 &\textbf{0.0004} & 0.0004 & 0.0004& \textbf{0.0018}&0.0025&0.0031&\textbf{0.0026}&\textbf{0.0004}&0.0005\\
        \hline
        std & 0.0011& 0.0010& 0.0003 &0.0002& 0.0002 &0.0002& 0.0010&0.0007&0.0019&0.0014&0.0002&0.0002\\
        \hline
    \end{tabular}
    \caption{Bias on male-indicated prompts.}
\end{table}
\subsection{Prediction-Set Conditional Coverage in Hierarchical Classification}
The experiment is implemented on GeForce MX250 GPU with CUDA version 10.1 and the random seed we use is $45$. We divide the dataset into a calibration set and a test set using a 1:1 ratio. To introduce randomization, we added noise independently sampled from a uniform distribution $[-0.005, 0.005]$ to each point of each dimension of the scoring function $h$.

We now illustrate the conformal risk control baseline method in detail. As in the former application We set $\lambda(x)$ uniformly for all $x$ as $\hat{\lambda}$ according to the equation (4) in \citep{angelopoulos2023conformal},
$$\hat{\lambda} = \inf\{\lambda: \frac{n}{n+1}\hat{R}_n(\lambda)+\frac{B}{n+1}\}$$
where $\hat{R}_n(\lambda) = \frac{1}{n}(L_1(\lambda)+L_2(\lambda)+...+L_n(\lambda))$ and $n$ is the sample size of the calibration set. In the hierarchical classification setting, denote $r^{(i)}$, $x^{(i)}$, $y^{(i)}$ and $u^{(i)}$ as the ith data point in the calibration set, we have $L_i(\lambda) = \sigma - \sum_{j=1}^K\mathbbm{1}_{\{\bm r^{(i)}_{q(j,u^{(i)})}(\bm x^{(i)})<\lambda\}}\mathbbm{1}_{\{y^{(i)}=j\}}$.

We provide some basic statistical information regarding the \textit{Web of Science} dataset\citep{kowsari2017HDLTex} in the table \ref{table}.

\begin{table}[H]
    \centering
\begin{tabular}{|c|c|c|c|c|c|c|c|}
    \hline
     Category&CS & Medical & Civil & ECE & Biochemistry & MAE & Psychology \\
     \hline
     number of sub category&17 & 53 & 11 & 16 & 9 & 9 & 19\\
     \hline
     number of passages & 1287 & 2842 & 826 & 1131 & 1179 & 707 & 1425\\
     \hline
\end{tabular}
    \caption{Basic statistical information of the \textit{Web of Science} dataset.}
    \label{table}
\end{table}

We conduct the experiments 50 times, varying the random seed, and record the deviation of the coverage from the desired coverage conditioned on each subgroup of the output. The table~\ref{table:4} displays the mean and standard deviation of these deviations. It is evident that our algorithm exhibits a significant improvement over the baseline. Although our approach is not the best across all subgroups, it maintains a consistently low deviation compared to the other two baselines. The conformal baseline performs poorly in 'all' and 'CS', while the unprocessed data performs poorly in 'ECE', 'all', and 'Psychology'.

\setlength{\tabcolsep}{2pt}\label{table:4}
\begin{table}[h]
    \centering
    \begin{tabular}{|c|c|c|c|c|c|c|c|c|c|c|c|c|c|c|}
        \hline
        &\multicolumn{3}{c|}{all}&\multicolumn{3}{c|}{CS}&\multicolumn{3}{c|}{Medical} & \multicolumn{3}{c|}{Civil}\\
        \hline
       &GMC&con& un&GMC & con&un &GMC&con&un&GMC & con&un \\
        \hline
        mean &\textbf{0.016}&0.068&0.093&0.027&0.071&\textbf{0.014}&0.003&\textbf{0.001}&0.004&\textbf{0.001}&0.001&0.004\\
        \hline
        std &0.004&0.005&0.005&0.003&0.004&0.003&0.002&0.001&0.003&0.001&0.001&0.001\\
        \hline
        &\multicolumn{3}{c|}{ECE}&\multicolumn{3}{c|}{Biochemistry} &\multicolumn{3}{c|}{MAE} &\multicolumn{3}{c|}{Psychology}\\
        \hline
        & GMC&con&un& GMC&con&un&GMC&con&un&GMC&con&un\\
        \hline
        mean &\textbf{0.002}&0.002&0.005&0.002&\textbf{0.001}&0.001&\textbf{0.001}&0.001&0.003&0.002&\textbf{0.001}&0.017\\
        \hline
        std &0.001&0.001&0.001&0.001&0.001&0.002&0.001&0.001&0.001&0.001&0.001&0.002\\
        \hline
    \end{tabular}
    \caption{The deviation of the coverage conditioned on each subgroup of the output in the Hierarchical Classification.'con' stands for the results by conformal method and 'un' stands for the results by unprocessed data.}
\end{table}

\subsection{Fair FNR Control in Image Segmentation}
The experiment is implemented on GeForce MX250 GPU with CUDA version 10.1 and the random seed we use is $42$. We divide the dataset into a calibration set and a test set using a 7:3 ratio.
\begin{figure}[H]
    \centering
    \includegraphics[width=.4\textwidth]{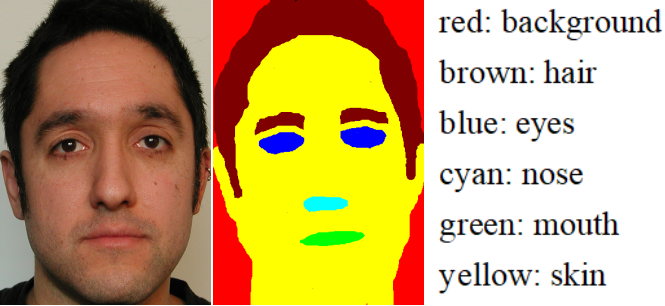}
    \caption{An overview of the data of the FASSEG dataset \citep{facesegmentation}.}
\end{figure}
The values of $\sigma$, and $\alpha$ should be carefully set in the experiment to ensure that the accuracy achieves good level when the algorithm halts. Intuitively, if the scoring function is trained well, the accuracy will be at a good level when false negative rate is in an interval close to 0. In our experiment, we set $f_0$ to be $1.5$ globally and $\sigma=0.075, \alpha=0.005$. So the FNR is controlled to fall within the range of $[0.07,0.08]$ in our experiment. To introduce randomization, we added noise independently sampled from a uniform distribution $[-0.1, 0.1]$ to each point of each dimension of the scoring function.

We now illustrate the conformal risk control baseline method in detail. We set $\lambda(x)$ uniformly for all $x$ as $\hat{\lambda}$ according to the equation (4) in \citep{angelopoulos2023conformal},
$$\hat{\lambda} = \inf\{\lambda: \frac{n}{n+1}\hat{R}_n(\lambda)+\frac{B}{n+1}\}$$
where $\hat{R}_n(\lambda) = \frac{1}{n}(L_1(\lambda)+L_2(\lambda)+...+L_n(\lambda))$ and $n$ is the sample size of the calibration set. In the image segmentation setting, denote $y^{(i)}$ and $x^{(i)}$ as the ith data point in the calibration set, we have $L_i(\lambda) = 1 - \frac{\sum_{j=1}^my_j^{(i)}\mathbbm{1}_{\{h_j(\bm x^{(i)})>\lambda\}}}{\sum_{j=1}^my_j^{(i)}}-\sigma$.

To prove the efficiency and robustness of our results, we conduct the experiments repeatly for 50 times, varying the random seed, and recording the deviation of the empirical False Negative Rate (FNR) from the desired FNR for each subgroup. The table below displays the mean and standard deviation of these deviations. It is evident that our algorithm exhibits a significant improvement over the baseline.
\setlength{\tabcolsep}{2pt}
\begin{table}[h]
    \centering
    \begin{tabular}{|c|c|c|c|c|c|c|c|c|c|c|c|c|}
        \hline
        &\multicolumn{3}{c|}{female group} & \multicolumn{3}{c|}{male group} & \multicolumn{3}{c|}{white group} &\multicolumn{3}{c|}{non-white group}\\
        \hline
         & GMC &con& un & GMC & con&un& GMC & con& un& GMC &con &un\\
        \hline
        mean & 0.0034&0.0025&\textbf{0.002}&\textbf{0.0066}&0.0092&0.0068&\textbf{0.0068}&0.0118&0.0077&\textbf{0.0029}&0.0037&0.0037\\
        \hline
        std& 0.0026& 0.0018&0.0044&0.0044&0.0058&0.0040&0.0045&0.0064&0.0053&0.0026&0.0020&0.0022\\
        \hline
    \end{tabular}
    \caption{The deviation of the target FNR rate of each subgroup in the Fair FNR Control in Image Segmentation. 'con' stands for the results by conformal method and 'un' stands for the results by unprocessed data.}
\end{table}

\section{Embedding Definitions from Related Work into the GMC Framework}
\subsection{Existing definitions}
Denote $\bm x\in \mathcal{X}$ as input, $y\in \mathcal{Y}$ as the labels, $f$ as the prediction model we aim to learn, and $\mathcal{C}$ as a set of functions (for example, the indicator function of certain sensitive groups).
We recall the definition of \textit{multi-accuracy}, which is widely used to ensure a uniformly small error across sensitive groups:
$$\mathbb{E}_{(\bm x,y)\sim \mathcal{D}}[c(f(\bm x),\bm x)(f(\bm x)-y)]\leq \alpha,\quad \forall c\in \mathcal{C}.$$
and \textit{multicalibration}\citep{hebertjohnson2018calibration}
$$\mathbb{E}_{(\bm x,y)\sim \mathcal{D}}[c(f(\bm x),\bm x)(f(\bm x)-y)|f(\bm x)]\leq \alpha,\quad \forall c\in \mathcal{C}.$$
 In some settings, $f(\bm x)-y$ can't explain all the properties of our prediction model. So generalizing $f(\bm x)-y$ in the multi-accuracy to be $s(f(\bm x),y)$ to generalize the form of the measure of inaccuracy and yields the definition of \textit{$s$-happy multicalibration} in the paper HappyMap\citep{deng2023happymap}:
$$|\mathbb{E}_{(\bm x,y)\sim \mathcal{D}}[c(f(\bm x),\bm x)s(f(\bm x),y)]|\leq \alpha,\quad \forall c\in \mathcal{C}.$$
Furthermore in some more complex settings, the labels to predict aren't true numbers (for example, sometimes we want to output a permutation), and neither do we predict the label directly. Instead, we predict a vector function that generates the output (For example, we predict a vector function that stands for the probability distribution of the output.) In that setting, we have an \textit{outcome indistinguishability} definition\citep{dwork2023pseudorandomness}:
$$\mathbb{E}_{(\bm x, o_{\bm x}^*)\sim \mathcal{D}}[A(\bm x,\tilde{o}_{\bm x},\tilde{\bm p})-A(\bm x,o_x^*,\tilde{\bm p})]\leq \epsilon,\quad \forall A\in \mathcal{A}$$ 
where $\mathcal{A}$ is the set of discriminators, $\tilde{o}_{\bm x}$ is the output distribution to learn and $o_{\bm x}^*$ is the true underlying distribution. The goal is to find $\tilde{o}_{\bm x}$ such that all discriminators fail to identify it from the true $o_{\bm x}^*$.

Also, in the context of conformal risk control, there exist two definitions to guarantee \textit{multivalid} coverage, which are also related to our work. The goal in this context is to find a threshold function such that it covers the label for an approximate ratio $q$. The first definition, denoted as \textit{Threshold Calibrated Multivalid Coverage}, is defined for sequential data \citep{gupta2022online,bastani2022practical}.

Suppose that there are $T$ rounds of data in total, namely $\{(\bm x^{(t)},y^{(t)})\}_{t=1}^T$. Define $\mathcal{T} \subseteq \mathcal{Y}$ to be the conformal prediction and $s^{(t)}: \mathcal{X} \times \mathcal{Y} \rightarrow \mathbb{R}_{\geq 0}$ to be the given scoring function. Denote $q^{(t)}$ as round-dependent threshold, which gives us a prediction set $\mathcal{T}^{(t)}=\left\{y \in \mathcal{Y}: s^{(t)}\left(\bm x^{(t)}, y\right) \leq q^{(t)}\right\}$. Fix a coverage target $(1-\delta)$ and a collection of groups $\mathcal{G} \subset 2^{\mathcal{X}}$.

A sequence of conformity thresholds $\{q^{(t)}\}_{t=1}^T$ is said to be \textit{$(\theta, m)$-multivalid} \citep{jung2022batch} with respect to $\delta$ and $\mathcal{G}$ for some function $\theta: \mathbb{N} \rightarrow \mathbb{R}$, if for every $i\in[m]\triangleq\{1,2,...,m\}$ and $G \in \mathcal{G}$, the following holds true:
$$
\left|\bar{H}\left(G^{(T)}(i)\right)-(1-\delta)\right| \leq \theta \left(\left|G^{(T)}(i)\right|\right).
$$
where $$
G^{(t)}(i)=\left\{\tau \in[t]: x_\tau \in G, q^{(t)} \in [\frac{i-1}{m},\frac{i}{m})\right\},$$
$$\bar{H}(S)=\frac{1}{|S|} \sum_{t \in S} \mathbbm{1}_{\{s^{(t)} \leq q^{(t)}\}}).$$
The second definition, denoted as \textit{q-quantile predictor} is defined for batch data \citep{jung2022batch}. Using $g'(\bm x)=1$ to denote the membership of certain subgroup of $x$, the quantile calibration error of $q$-quantile predictor $f: \mathcal{X} \rightarrow[0,1]$ on group $g'$ is:
$$
Q(f, g')=\sum_{v \in R(f)} \Prob_{(\bm x, s) \sim \mathcal{S}}(f(\bm x)=v \mid g'(\bm x)=1)\left(q-\Prob_{(\bm x, s) \sim \mathcal{S}}(s \leq f(\bm x) \mid f(\bm x)=v, g'(\bm x)=1)\right)^2 .
$$
We say that $f$ is $\alpha$-approximately $q$-quantile multicalibrated with respect to group collection $\mathcal{G'}$ if
$$
Q(f, g') \leq \frac{\alpha}{\Prob_{(\bm x, s) \sim \mathcal{S}}(g'(\bm x)=1)} \quad \text { for every } g' \in \mathcal{G'}.
$$

\subsection{Expressing These Definitions as Generalized Multicalibration} 
We prove that the definition can be reduced to both the definition of HappyMap and the definition of Indistinguishability.
\subsubsection{$s$-HappyMap}
Recall that the definition of $s$-HappyMap is 
$$\mathbb{E}_{(\bm x,y)\sim \mathcal{D}}[c(f(\bm x),\bm x)s(f(\bm x),y)]\leq \alpha,\quad \forall c\in \mathcal{C}.$$
And $(\bm s,\mathcal{G}, \alpha)-$GMC (where $\bm s(\bm x) \in \mathbb{R}^m$) is defined as:
$$\mathbb{E}_{(\bm x,\bm h,\bm y)\sim \mathcal{D}}[\langle \bm g(\bm x, \bm f(\bm x))\bm s(\bm f, \bm x, \bm h, \bm y, \mathcal{D})\rangle]\leq \alpha,\quad \forall \bm g\in \mathcal{G}.$$
We transform $(\bm s, \mathcal{G}, \alpha)$-GMC into $s$-HappyMap by defining the following function: (The left side of the equation represents the notation of $(\bm s, \mathcal{G}, \alpha)$-GMC, while the right side represents the notation of $s$-HappyMap.)
$$m=1, \mathcal{G}=\mathcal{C}, g=c, f(\bm x)\in \mathbbm{R}, s(\bm f, \bm x, \bm h, \bm y, \mathcal{D}) = s(f(\bm x), y).$$
Then we have
$$\mathbb{E}_{(\bm x, \bm h, \bm y)\sim \mathcal{D}}[\langle s(\bm f,\bm x,\bm h,\bm y, \mathcal{D}),g(f(\bm x), \bm x)\rangle]=\mathbb{E}_{(\bm x,y)\sim \mathcal{D}}[s(f(\bm x),y)c(f(\bm x),\bm x)].$$
$$\mathbb{E}_{(\bm x, \bm h, \bm y)\sim \mathcal{D}}[\langle s(\bm f, \bm x,\bm h,\bm y, \mathcal{D}),g(f(\bm x), \bm x)\rangle]\leq \alpha, \forall g\in \mathcal{G}. $$
$$\Leftrightarrow \quad \mathbb{E}_{(\bm x,y)\sim \mathcal{D}}[s(f(\bm x),y)c(f(\bm x),\bm x)]\leq \alpha, \forall c \in \mathcal{C}.$$
Thus, the formulation of $(\bm s,\mathcal{G}, \alpha)$-GMC is reduced to the $s$-HappyMap.
\subsubsection{Outcome Indistinguishability}
Recall the definition of the outcome indistinguishability is
$$\mathbb{E}_{(\bm x, o_{\bm x}^*)\sim \mathcal{D}}[A(\bm x,\tilde{o}_{\bm x},\tilde{\bm p})-A(\bm x,o_x^*,\tilde{\bm p})]\leq \epsilon,\quad \forall A\in \mathcal{A}$$ 
where $\mathcal{A}$ is the set of discriminators, $\tilde{o}_{\bm x}$ is the output distribution to learn and $o_{\bm x}^*$ is the true underlying distribution. The goal is to find $\tilde{o}_{\bm x}$ such that all discriminators fail to identify it from the true $o_{\bm x}^*$.

And $(\bm s,\mathcal{G}, \alpha)-$GMC (where $\bm s(\bm x) \in \mathbb{R}^m$) is defined as:
$$\mathbb{E}_{(\bm x,\bm h,\bm y)\sim \mathcal{D}}[\langle \bm g(\bm x, \bm f(\bm x))\bm s(\bm f, \bm x, \bm h, \bm y, \mathcal{D})\rangle]\leq \alpha,\quad \forall \bm g\in \mathcal{G}.$$

We transform $(\bm s, \mathcal{G}, \alpha)$-GMC into Outcome Indistinguishability by defining the following function: (The left side of the equation represents the notation of $(\bm s, \mathcal{G}, \alpha)$-GMC, while the right side represents the notation of Outcome Indistinguishability.)

$O = \Delta O = \{x\in [0,1]^m: \sum_{i=1}^m x_i=1\},\quad \bm f(\bm x)=\tilde{\bm p}(\bm x)\in \mathbbm{R}^m$ is the probability distribution of over the output space. Denote $\mathcal{Y} = \{\mathcal{Y}_1, \mathcal{Y}_2, ...,\mathcal{Y}_K\}$ and $\bm p_{y}^* = (\Prob[y = \mathcal{Y}_1],\Prob[y = \mathcal{Y}_2],...,\Prob[y = \mathcal{Y}_K]),$ we further set
$$\bm s(\tilde{\bm p},\bm x, h, y, \mathcal{D})=\bm{\tilde{p}}(\bm x)-\bm p^*_{y}\in \mathbbm{R}^m.\quad \mathcal{G} = \{\bm g(\tilde{\bm p}(\bm x), \bm x)=A(\bm x,\cdot ,\bm{\tilde{p}})\in\mathbbm{R}^m: A\in \mathcal{A}\}.$$
$$\mathbb{E}_{(\bm x, \bm h, y)\sim \mathcal{D}}[\langle \bm s(\bm f,\bm x,h,y,\mathcal{D}),\bm g(\bm f(\bm x), \bm x)\rangle]=\mathbb{E}[A(\bm x,\tilde{o}_{\bm x},\bm{\tilde{p}})-A(\bm x,o_x^*,\bm{\tilde{p}})].$$
$$\mathbb{E}_{(\bm x, \bm h, y)\sim \mathcal{D}}[\langle \bm s(\bm f,\bm x,h,y,\mathcal{D}),\bm g(\bm f(\bm x), \bm x)\rangle]\leq \alpha, \forall \bm g\in \mathcal{G}. \quad \Leftrightarrow \quad \mathbb{E}[A(\bm x,\tilde{o}_{\bm x},\bm{\tilde{p}})-A(\bm x,o_x^*,\bm{\tilde{p}})]\leq \alpha, \forall A \in \mathcal{A}.$$
Thus, the formulation of $(\bm s,\mathcal{G}, \alpha)$-GMC is reduced to the Outcome Indistinguishability. 
\subsubsection{Multivalid Prediction}
Recall the definition of the $(\theta, m)-$multivalid prediction is 
$$
\left|\bar{H}\left(G^{(T)}(i)\right)-(1-\delta)\right| \leq \theta \left(\left|G^{(T)}(i)\right|\right).\quad \forall G\in \mathcal{G}, \forall i\in[m].
$$
where $$
G^{(t)}(i)=\left\{\tau \in[t]: x_\tau \in G, q^{(t)} \in [\frac{i-1}{m},\frac{i}{m})\right\},$$
$$\bar{H}(S)=\frac{1}{|S|} \sum_{t \in S} \mathbbm{1}_{\{s^{(t)} \leq q^{(t)}\}}.$$
Recall that the finite-sampling version $(\bm s,\mathcal{G}, \alpha)$-GMC (where $\bm s(\bm x) \in \mathbb{R}^m$) is defined as:
$$\frac{1}{T}\sum_{i=1}^{T}[\langle \bm g(\bm x_{(i)},\bm f(\bm x_{(i)})),\bm s(\bm f, \bm x_{(i)}, \bm h, \bm y_{(i)}, \mathcal{D})\rangle]\leq \alpha,\quad \forall \bm g\in \mathcal{G}.$$
Where $T$ denotes the sample size, $\{(\bm x_{(i)},\bm y_{(i)})\}$ denotes the data.
We transform the finite sampling version of $(\bm s, \mathcal{G}, \alpha)$-GMC into $(\theta, m)-$multivalid by defining the following function: (The left side of the equation represents the notation of $(\bm s, \mathcal{G}, \alpha)$-GMC, while the right side represents the notation of $(\theta, m)-$multivalid.)

Let $m=1, h=s, f=q, s(f, \bm x,h,y, \mathcal{D}) = \mathbbm{1}_{\{s \leq q\}}-(1-\delta), \mathcal{G}= \{\mathbbm{1}_{\{x\in G\}}\mathbbm{1}_{q\in [\frac{i-1}{m},\frac{i}{m})}:G\in \mathcal{G}, i\in [m]\}, \theta(|G^{(t)}(i)|)=\frac{\alpha T}{|G^{(t)}(i)|}$.

When $g(f(\bm x),\bm x)=\mathbbm{1}_{\{x\in G\}}\mathbbm{1}_{\{q\in [\frac{i-1}{m},\frac{i}{m})\}}$,
\begin{align}
    \frac{1}{T}\sum_{i=1}^{(T)}[\langle \bm g(\bm x_{(i)}, \bm f(\bm x_{(i)}))\bm s(\bm f, \bm x_{(i)}, \bm h, \bm y_{(i)}, \mathcal{D})\rangle]&=\frac{1}{T}\sum_{t=1}^Tg(q(\bm x^{(t)}), \bm x^{(t)})(\mathbbm{1}_{\{s(\bm x^{(t)},y^{(t)}) \leq q(\bm x^{(t)})\}}-(1-\delta))\nonumber\\
    &=\frac{|G^{(T)}(i)|}{T}|\bar{H}(G^{(T)}(i))-(1-\delta)|\nonumber\\
    &\leq \alpha\nonumber\\
    \Leftrightarrow |\bar{H}\left(G^{(T)}(i)\right)-(1-\delta)| &\leq \frac{\alpha T}{|G^{(T)}(i)|}=\theta(|G^{(T)}(i)|)).\nonumber
\end{align}
So
$$
\frac{1}{T}\sum_{i=1}^{T}[\langle \bm g(\bm x_{(i)}, \bm f(\bm x_{(i)}))\bm s(\bm f, \bm x_{(i)}, \bm h, \bm y_{(i)}, \mathcal{D})\rangle]\leq \alpha, \quad \forall \bm g\in \mathcal{G}
$$
$$\Leftrightarrow \quad |\bar{H}\left(G^{(T)}(i)\right)-(1-\delta)| \leq \frac{\alpha T}{|G^{(T)}(i)|}=\theta(|G^{(T)}(i)|)), \forall G\in \mathcal{G}, \forall i\in[m].$$
Thus, the sampling version of $(\bm s, \mathcal{G}, \alpha)-$GMC is reduced to the MultiValid Prediction where $\theta$ is of a specific form. 
\subsubsection{Quantile Multicalibration}
The quantile calibration error of $q$-quantile predictor $f: \mathcal{X} \rightarrow[0,1]$ on group $g'$ is:
$$
Q(f, g')=\sum_{v \in R(f)} \Prob_{(\bm x, s) \sim \mathcal{S}}(f(\bm x)=v \mid g'(\bm x)=1)\left(q-\Prob_{(\bm x, s) \sim \mathcal{S}}(s \leq f(\bm x) \mid f(\bm x)=v, g'(\bm x)=1)\right)^2 .
$$
Recall that $f$ is $\alpha$-approximately $q$-quantile multicalibrated with respect to group collection $\mathcal{G'}$ if
$$
Q(f, g') \leq \frac{\alpha}{\Prob_{(\bm x, s) \sim \mathcal{S}}(g'(\bm x)=1)} \quad \text { for every } g' \in \mathcal{G'}.
$$
where $g'(\bm x)=1$ denotes that $x$ belong to a certain subgroup.
This is equal to
$$q^2\mathbb{E}_{(\bm x, s) \sim \mathcal{S}}[\mathbbm{1}_{\{g'(\bm x)=1\}}]-2q\mathbb{E}_{(\bm x, s) \sim \mathcal{S}}[\mathbbm{1}_{\{s\leq f\}}\mathbbm{1}_{\{g'(\bm x)=1\}}]+\sum_{v\in R(f)}\frac{(\mathbb{E}_{(\bm x, s) \sim \mathcal{S}}[\mathbbm{1}_{\{s\leq f\}}\mathbbm{1}_{\{g'(\bm x)=1\}}\mathbbm{1}^2_{\{f(\bm x)=v\}})]}{\mathbb{E}_{(\bm x, s) \sim \mathcal{S}}[\mathbbm{1}_{\{f(\bm x)=v\}}\mathbbm{1}_{\{g'(\bm x)=1\}}]}\leq \alpha,$$
$$\forall g'\in \mathcal{G'}.$$
We set $\mathcal{G}=\{\pm \mathbbm{1}_{\{g'(\bm x)=1\}}:g'\in \mathcal{G'}\}, s(f,\bm x,h,y,\mathcal{D})=q-\mathbbm{1}_{\{s\leq f\}}$, then $\mathbb{E}_{(\bm x,h,y)\sim \mathcal{D}} = \mathbb{E}_{(\bm x, s) \sim \mathcal{S}}[g(\bm x)(q-\mathbbm{1}_{\{s\leq f\}})].$ (The left side of the equation represents the notation of $(\bm s, \mathcal{G}, \alpha)$-GMC, while the right side represents the notation of $\alpha$-approximately $q$-quantile.) We have that
$$\lvert\mathbb{E}_{(\bm x, s) \sim \mathcal{S}}[\mathbbm{1}_{\{g'(\bm x)=1\}}(q-\mathbbm{1}_{\{s\leq f\}})]\rvert \leq \alpha \quad \forall g'\in \mathcal{G'}.$$
This is equal to
$$\lvert q\mathbb{E}_{(\bm x, s) \sim \mathcal{S}}[\mathbbm{1}_{\{g'(\bm x)=1\}}] - \mathbb{E}_{(\bm x, s) \sim \mathcal{S}}[\mathbbm{1}_{\{g'(\bm x)=1\}}\mathbbm{1}_{\{s\leq f\}}]\rvert \leq \alpha \quad \forall g'\in \mathcal{G'}.$$
The two definitions can't be reduced to each other because of the presence of the term \\
$\sum_{v\in R(f)}\frac{(\mathbb{E}_{(\bm x, s) \sim \mathcal{S}}[\mathbbm{1}_{\{s\leq f\}}\mathbbm{1}_{\{g'(\bm x)=1\}}\mathbbm{1}^2_{\{f(\bm x)=v\}})]}{\mathbb{E}_{(\bm x, s) \sim \mathcal{S}}[\mathbbm{1}_{\{f(\bm x)=v\}}\mathbbm{1}_{\{g'(\bm x)=1\}}]}.$ Despite their differences, they both provide robust measures for assessing the extent of coverage in the given context.
\end{document}